\newtheorem{theorem}{Theorem}
\newtheorem{assumption}{Assumption}
\newtheorem{lemma}{lemma}
\newtheorem{remark}{Remark}
\title{Theoretical analysis of deep neural networks for temporally dependent observations
}
\author{
  Mingliang Ma \\
  Department of Statistics\\
  University of Florida\\
  Gainesville, FL 32611 \\
  \texttt{maminglian@ufl.edu} \\
  \And
  Abolfazl Safikhani \\
  Department of Statistics \\
  George Mason University \\
  Fairfax, VA 22030 \\
  \texttt{asafikha@gmu.edu} \\
}
\begin{document}

\maketitle

\begin{abstract}
Deep neural networks are powerful tools to model observations over time with non-linear patterns. Despite the widespread use
of neural networks in such settings, most theoretical developments of deep neural networks are under the assumption of independent observations, and theoretical results for temporally dependent observations are scarce. To bridge this gap, we study theoretical properties of deep neural networks on modeling non-linear time series data. Specifically, non-asymptotic bounds for prediction error of (sparse) feed-forward neural network with ReLU activation function is established under mixing-type assumptions. These assumptions are mild such that they include a wide range of time series models including auto-regressive models. Compared to independent observations, established convergence rates have additional logarithmic factors to compensate for additional complexity due to dependence among data points. The theoretical results are supported via various numerical simulation settings as well as an application to a macroeconomic data set.

\end{abstract}

\section{Introduction}
Neural networks have the ability to model highly complex relationship among data. If input data are observed data in past with future observations as response, neural networks can be utilized to perform time series forecasting. Examples of application of neural networks in forecasting include biotechnology \cite{almeida2002predictive}, finance \cite{odom1990neural}, health sciences \cite{lisboa2002review}, and business \cite{zhang2004neural}, just to name a selected few. Compared to more traditional time series forecasting methods such as ARIMA models \cite{brockwell2002introduction}, neural networks have the ability to detect highly non-linear trend and seasonality. In this work, we analyse the prediction error consistency of (deep) feed-forward neural networks to fit stationary (non-linear) time series models.


The property of the single hidden layer neural network is well studied in the past few decades. For example, \cite{mccaffrey1994convergence} use single hidden layer neural network with a transformed cosine activation function to show that a sufficiently complex single hidden layer feed-forward network can approximate any member of a specific class of functions to any desired degree of accuracy. Such an approximation property for neural networks with sigmoidal activation function was also analyzed in \cite{barron1993universal,barron1994approximation}. Further, \cite{bach2017breaking} use monotonic homogeneous activation function (general version of ReLU activation function) and show that both the input dimension and number of hidden units have an effect on the convergence rate when using single layer neural networks.


There are many recent works which shed some light on the reasoning behind the good performance of multi-layer (or deep) neural networks. The performance is evaluated via computing mean squared predictive error which is also called the statistical risk. For example, \cite{barron2018approximation} show that the statistical risk of a multi-layer neural network depends on the number of layers and the input dimension of each layer. The problem of applying deep neural network in high-dimensional settings is that the high-dimensional input vector in nonparametric regression leads to a slow convergence rate \cite{bauer2019deep} while the complexity often scales exponentially with the depth or number of units per layer \cite{bartlett2017spectrally, golowich2018size,neyshabur2015norm}. Further, convergence rate for prediction error is also related to the regression function and fast rate can be obtained in special classes of regression functions such as additive and/or composition functions \cite{bauer2019deep,juditsky2009nonparametric,kohler2005adaptive}. To avoid the curse of dimensionality and achieving faster rates, \cite{schmidt2020nonparametric} work under hierarchical composition assumption with a sparse neural network. It is shown that under the independence assumption over input vectors, the estimator utilizing a sparse network achieves nearly optimal convergence rates for prediction error. Finally, \cite{truong2021generalization} use neural network as a classifier for temporally dependent observations which are based on Markov processes. We refer to \cite{fan2019selective} for an overview of deep learning methods.


Most of theoretical developments related to prediction error consistency of neural networks are under the assumption that either the input variables are independent or they are independent with the error (noise) term, or both. However, these assumptions are restrictive and may not hold in time series models. To bridge this gap, the main goal of this paper is to establish consistency rates for prediction error of deep feed-forward neural networks for temporally dependent observations. To that end, we focus on multivariate nonparametric regression model with bounded and composite regression functions and apply sparse neural networks with ReLU activation functions for estimation (see more details in Section~\ref{sec:setup}). The modeling framework is similar to \cite{schmidt2020nonparametric} while the independence assumption is relaxed. Specifically, we show that given temporally dependent observations, under certain mixing condition, the statistical risk coincides with the result of independent observations setting with an additional $\mathrm{log}^4(n)$ factor where $n$ is the sample size (Theorem~\ref{th1}). Moreover, utilizing the Wold decomposition,
this result is extended to a general family of stationary time series models in which it is shown that the decay rate of $AR(\infty)$ representation coefficients plays an important role on the consistency rate for prediction error of neural networks (Theorem~\ref{th2}). These results give some insights on the effect of temporal dependence on the performance of neural networks by specifically quantifying the prediction error in such settings. Finally, the prediction performance of neural networks in time series settings is investigated empirically via several simulation scenarios and a real data application (Sections~\ref{sec:sim} and \ref{sec:real}).




 
 
\textit{Notation}. For two random variables $X$ and $Y$, $X \overset{D}{=} Y$ implies that $X$ and $Y$ have the same distribution. For a matrix $W$, $\Vert W\Vert_{\infty} := \mathrm{max}_{i,j}|W_{ij}|$, $\Vert W\Vert_{0}$ is the number of nonzero entries of $W$, and $\Vert W\Vert_1:=\sum_{i,j}|W_{ij}|$. For a vector $v$, $\vert v\vert_{0}$, $\vert v\vert_{1} $ and $\vert v\vert_{\infty}$ are defined by the same way. We write $\lfloor x\rfloor$  for the smallest integer $\geq x$.
For two sequences $(a_t)_{t\geq 1}$ and $(b_t)_{t\geq 1}$, we write $a_t \lesssim b_t$ if there exists a constant $c \geq 1$ such that $a_t \leq c b_t$ for all $t$. If both $a_t \lesssim b_t$ and $b_t \gtrsim a_t$, we write $a_t \asymp b_t$. Also, $a_t = o (b_t)$ implies $a_t/b_t \rightarrow 0$ as $t \rightarrow \infty$. For a multi-dimension random variable $\mathbf{X}$, $\mathbf{X} \sim N(\mu,\Sigma)$ implies that $X$ has multivariate Gaussian distribution with mean $\mu$ and covariance matrix $\Sigma$. For two functions $f, g$, we use $f\circ g(x)$ to denote $f(g(x))$. Also, $\Vert f\Vert_{\infty}:=\mathrm{sup}_x|f(x)|$ is the sup-norm of $f$, and $(a)_{+} = \max(a,0)$ for $a \in \mathbb{R}$.

\vspace{-0.25cm}

\section{Setup}\label{sec:setup}

\vspace{-0.25cm}

In this section, a brief presentation of feed-forward neural networks is provided in Section~\ref{sec:nn} followed by discussing the modeling framework under consideration in Section~\ref{sec:model}.

\vspace{-0.25cm}

\subsection{Background about multi-layer neural networks}\label{sec:nn}
Multi-layer neural network is composed of three parts: input layer, hidden layers and output layer. We denote the depth of a multi-layer neural network by $L$, which implies that there are $L+1$ layers in total consisting of $L-1$ hidden layers, one input layer and one output layer. We refer to the input layer as the $0$-th layer and the output layer as the $L$-th layer. The multi-layer neural network can be written as

\begin{equation}\label{eq2_1}
f(x): \mathbb{R}^d \longrightarrow \mathbb{R} = W_L \sigma_{\mathbf{v}_{L}}( W_{L-1} \sigma_{\mathbf{v}_{L-1}}(\cdots W_1\sigma_{\mathbf{v}_0}(W_0(x))))),
\end{equation}


where $W_i$ is the matrix of weights between $(i-1)$-th and $i$-th layers of the network ($i=1, \ldots, L$) and $\sigma_{\mathbf{v}}$ is a modified ReLU activation function in each layer. Specifically, for the shift parameter $\mathbf{v} = (v_1, \cdots, v_r) \in \mathbb{R}^r$, the activation function $ \sigma_{\mathbf{v}}:\mathbb{R}^r \longrightarrow \mathbb{R}$ is defined as
\begin{equation*}
\sigma_{\mathbf{v}}
\begin{pmatrix}
     x_1   \\
     \vdots \\
     x_r
\end{pmatrix}
=
\begin{pmatrix}
     (x_1-v_1)_{+}   \\
     \vdots \\
     (x_r-v_r)_{+}
\end{pmatrix}.
\end{equation*}
Let $p_i$ denote the number of units in the $i$-th layer (note that $p_0 = d$ and $p_L = 1$). For a fully connected multi-layer neural network, the total number of parameters is $\sum_{i=0}^{L-1} p_{i}p_{i+1}$ that is defined as the size of a  multi-layer neural network \cite{choromanska2015loss}. Similar to \cite{schmidt2020nonparametric}, the entries of all weight matrices $\{W_i\}_{i=1,\cdots,L}$ and shift parameters $\{\mathbf{v}_i\}_{i=1,\cdots,L}$ are assumed to be uniformly bounded. The sparsity level $s$ is defined as the number of all non-zero parameters in $\{W_i\}_{i=1,\cdots,L}$ and $\{\mathbf{v}_i\}_{i=1,\cdots,L}$. We also assume that the value of $|f|$ is bounded by some constant $F$. In summary, we focus on the collection of $s$-sparse multi-layer neural networks with bounded parameters which is denoted by $\mathcal{F}(L,p,s,F)$ and defined as
\begin{equation*}
\begin{split}
&\mathcal{F}(L,\mathbf{p},s,F):= 
\{
f \in \mathcal{F}(L,\mathbf{p}): \sum_{i=0}^{L}\Vert W_i \Vert_0 + |\mathbf{v}_i|_0\leq s , \Vert f\Vert_{\infty}\leq F
\},      \\
& \text{where} \,\,\, \mathcal{F}(L,\mathbf{p}):= 
\{
f \ \text{of form (\ref{eq2_1})}: \mathop{\mathrm{max}}_{i=0,1,\cdots,L} \Vert W_i \Vert_{\infty} \vee |\mathbf{v}_i|_{\infty}\leq 1
\}.
\end{split}
\end{equation*}

This restriction of neural networks to the ones with sparse connections and bounded parameters is common on deep learning (see e.g. \cite{schmidt2020nonparametric} and references therein) since neural networks are typically trained using certain penalization methods and dropouts.

\subsection{Model}\label{sec:model}
The modeling framework considered is similar to \cite{schmidt2020nonparametric} while allowing for temporal dependence among observations. Let $(\epsilon_t)_{t\geq 1}$ be a sequence of independent random variables with $\mathbb{E}[\epsilon_t]=0$. Let $\{\mathbf{X}_t\}_{t\geq 1}$ be a p-dimensional stationary process with $\mathbf{X}_t := (X_{t,1},\cdots,X_{t,p})$. We assume $Y_t$ is generated as
\begin{equation}\label{sect2_eq1}
   Y_t = f_0(\mathbf{X}_t) +\epsilon_t, 
\end{equation}
with a measurable function $f_0:\mathbb{R}^p \longrightarrow \mathbb{R}$.
We assume that the regression function $f_0$ is a composition of several functions, specifically
\begin{equation}\label{model_eq2}
    f_0 = g_q \circ g_{q-1} \circ \cdots \circ g_1 \circ g_0,
\end{equation}
with $g_i:[a_i,b_i]^{d_i}\longrightarrow [a_{i+1},b_{i+1}]^{d_{i+1}}$, where $d_0=p, d_{q+1}=1$. Each $g_i$ has a $d_{i+1}$-dimensional vector output $g_i = (g_{i,1},\cdots,g_{i,d_{i+1}})$. We assume that the multivariate function $g_{i,j}$ depends on at most $t_i$ variables while $t_i$ is far less than $d_i$, i.e. $t_i \ll d_i$. As mentioned in \cite{schmidt2020nonparametric}, for a $\beta$-smooth function $f_0$, the minimax estimation rate for the prediction error is $n^{-2\beta/(2\beta + d_0)}$. Since the dimensionality $d_0$ can be large in applications, the rate can be slow. To mitigate this issue, the sparse structure avoids the effect of input dimension on the convergence rate and improves the rate.



Let $T$ be a region in $\mathbb{R}^r$. Let $\beta$ and $L$ be two positive
numbers. The H\"older class $\Sigma(\beta,L)$ is defined as the set of $\alpha = \lfloor \beta \rfloor$
times differentiable functions $f : T \longrightarrow \mathbb{R}^r$ whose derivative $\partial^{\alpha}f(\mathbf{x})$ satisfies
\begin{equation*}
\frac{|\partial^{\alpha} f(\mathbf{x})-\partial^{\alpha} f(\mathbf{y})|}{|\mathbf{x}-\mathbf{y}|_{\infty}^{\beta-\lfloor\beta\rfloor}} \leq L,
\end{equation*}

where we used the notation $\partial^{\alpha} = \partial^{\alpha_1}\cdots\partial^{\alpha_r} $ with $\alpha = (\alpha_1,\cdots,\alpha_r)$ and $|\alpha|:=| \alpha |_1 $. Further, we define the ball of $\beta$-H\"older functions with radius $K$ as

\begin{equation*}
\begin{split}
\mathcal{C}_r^{\beta}(D,K) = 
\{  f : 
&D \subset \mathbb{R}^r \longrightarrow \mathbb{R}:\\
&\sum_{\alpha : |\alpha|<\beta} \Vert \partial^{\alpha}f \Vert_{\infty}
+
\sum_{\alpha:|\alpha|=\lfloor \beta \rfloor} \mathop{\mathrm{sup}}_{\mathbf{x},\mathbf{y}\in D}
\frac{|\partial^{\alpha} f(\mathbf{x})-\partial^{\alpha} f(\mathbf{y})|}{|\mathbf{x}-\mathbf{y}|_{\infty}^{\beta-\lfloor\beta\rfloor}} 
\leq 
K
\}.
\end{split}
\end{equation*}

We assume that all functions $g_{ij}$ for $i= 0,\cdots, q$ and $j= 1,\cdots, d_{i+1}$ belong to $\beta_i$-H\"older class $\mathcal{C}_{t_i}^{\beta_{i}}(D_{ij},K)$. From the model (\ref{model_eq2}), we know that $D_{ij}=[a_i,b_i]^{t_i}$. Hence, the class of $f_0$ we focus belongs to
\begin{equation}\label{sect2_eq2}
\begin{split}
\mathcal{G}(q,\mathbf{d},\mathbf{t},\boldsymbol\beta,K):= 
\{
f_0=g_q\circ \cdots\circ g_0 : g_i = (g_{ij})_j:[a_i,b_i]^{d_i}\longrightarrow [a_{i+1},b_{i+1}]^{d_{i+1}},\\
g_{ij}\in \mathcal{C}_{t_i}^{\beta_{i}}([a_i,b_i]^{t_i},K), \ \text{for} \ \text{some} \ |a_i|,|b_i|\leq K
\},
\end{split}
\end{equation}
with $\mathbf{d} := (d_0,\cdots, d_{q+1}), \mathbf{t} := (t_0,\cdots, t_q)$ and $\boldsymbol\beta := (\beta_0, \cdots, \beta_q)$.

\section{Main result}\label{sec:main results}


In this section, we present consistency results for prediction error of deep neural networks applied to model~\eqref{sect2_eq1} followed by providing time series model examples satisfying the assumptions in Section~\ref{sec:ar}. First, we need to introduce some notations and state the main assumptions under which the theoretical developments are established. For any estimator $\widehat{f}_n$ in the class $\mathcal{F}(L,p,s,F)$, we define (similar to \cite{schmidt2020nonparametric}) $\Delta_n(\widehat{f}_n,f_0)$ to measure the difference between the expected empirical risk of $\widehat{f}_n$ and the global minimum over all networks in the class $\mathcal{F}(L,p,s,F)$ as
\begin{equation*}
\begin{split}
&\Delta_n(\widehat{f}_n,f_0)\\
&\quad:=
\mathbb{E}_{f_0}\left[\frac{1}{n}\sum_{i=1}^n(Y_i-\widehat{f}_n(\mathbf{X}_i))^2
-
\mathop{\mathrm{inf}}_{f \in \mathcal{F}(L,p,s,F)}\frac1n\sum_{i=1}^n(Y_i-f(\mathbf{X}_i))^2
\right].    
\end{split}
\end{equation*}


The quantity $\Delta_n(\widehat{f}_n,f_0)$ plays a pivotal role in consistency properties of neural networks. The performance of $\widehat{f}_n$ is evaluated by the prediction error defined as
\begin{equation}\label{sect3_eq2}
R(\widehat{f}_n,f_0) := \mathbb{E}_{f_0}\left[ (\widehat{f}_n(\mathbf{X})-f_0(\mathbf{X}))^2 \right],
\end{equation}

with $\mathbf{X} \overset{D}{=} \mathbf{X}_t $ and $\mathbf{X}$ is independent with $\{\mathbf{X}_{t}\}_{t\geq 0}$. Recall from Section~\ref{sec:model}, the regression function $f_0$ is in the class $\mathcal{G}(q,\mathbf{d},\mathbf{t},\boldsymbol\beta,K)$. To simplify notations, we define $\beta_i^{*}:= \beta_i \prod_{\ell = i+1}^q (\beta_{\ell}  \wedge 1), \phi_n := \mathop{\mathrm{max}}_{i=0,\cdots,q} n^{-\frac{2\beta_i^{*}}{2\beta_i^{*}+t_i}}$. The following assumptions are needed to present the first theorem.


\begin{assumption}\label{assum1} For all $i = 1, 2, \ldots$,
$\mathbb{E}[\epsilon_i] =0$, $\mathbb{E}[\epsilon_i^2] =\sigma^2$, and there exists some positive constant $c$ such that $\mathbb{E}[|\epsilon_i|^m] \leq \sigma^2m!c^{m-2}
, m = 3, 4, \cdots$. 
\end{assumption}
\begin{assumption}\label{assum2}
$\{\mathbf{X}_t\}$ is a strictly stationary and exponentially $\alpha$-mixing process. Recall that the $\alpha$-mixing coefficient of a stationary process $\{\mathbf{X}_t\}$ is define as
\begin{equation*}
\alpha(s) = \mathop{\mathrm{sup}}\{ |\mathbb{P}(A\cap B)- \mathbb{P}(A)\mathbb{P}(B)|: -\infty< t <\infty, A\in \sigma(\mathbf{X}_t^{-}), B\in \sigma(\mathbf{X}_{t+s}^{+})  \},
\end{equation*}

where $\mathbf{X}_t^{-}$ consists of the entire past of the process including $\mathbf{X}_t$, and $\mathbf{X}_t^{+}$ consists of its entire future. The process $\{\mathbf{X}_t\}$ is said to be exponentially $\alpha$-mixing if there exists some constant $\tilde{c}>0$ such that $\mathrm{log}(\alpha(t))\leq -\tilde{c}t, t\geq 1$.
\end{assumption}
\begin{assumption}\label{assum3}
The error term $\epsilon_t$ is independent with $\{X_s, s\leq t\}.$
\end{assumption}

Assumption 1 is known as the Bernstein condition and implies that $\epsilon_t$ is a sub-exponential variable. This assumption is often used when we cannot assume $\epsilon_t$ is bounded. Assumption 2 is to control the dependence among input variables and holds for a wide range of time series models, see e.g. auto-regressive models in  \cite{doukhan2012mixing}. Assumption 3 controls dependence between input variables and error terms. A more stringent condition is to assume the whole error process $\{ \epsilon_t \}_{t \geq 0}$ is independent with $\{X_s\}_{s \geq 0}$. However, this assumption is restrictive since it excludes auto-regressive model which is an important family of time series models. To avoid this, we only assume the current error term is independent of current and past input variables. Further, this assumption ensures that $\sum_t\epsilon_t f_0(\mathbf{X}_t)$ is a martingale which helps in verifying certain concentration inequalities needed in the proof of main results. All three assumptions are common in non-linear time series analysis, see e.g. \cite{davis2020modeling}. Now, we are ready to state the main result.




\begin{theorem}\label{th1}
Consider the d-variate nonparametric regression model~\eqref{sect2_eq1} for a composite regression function \eqref{model_eq2} in the class $\mathcal{G}(q,\mathbf{d},\mathbf{t},\boldsymbol\beta,K)$. Suppose Assumptions~1-3 hold. Let $\widehat{f}_n$ be an estimator taking values in the network class $\mathcal{F}(L,(p_i)_{i=0,\cdots,L+1},s,F)$ satisfying (i) $F\geq \mathrm{max}(K,1)$; (ii) $\sum_{i=0}^{q}\mathrm{log}_2(4t_i\vee 4\beta_i)\mathrm{log}_2n\leq L \lesssim n\phi_n$; (iii) $n\phi_n \lesssim \mathrm{min}_{i=1,\cdots,L}p_i$; and (iv) $s \asymp n\phi_n\mathrm{log}n$. Then, there exist positive constants $C, C^{'}$ depending only on $q, \mathbf{d}, \mathbf{t}, \boldsymbol\beta, F$, such that if $\Delta_n(\widehat{f},f_0)\leq C\phi_n L\mathrm{log}^6n$, then
\begin{equation}\label{th1_eq1}
\begin{split}
R(\widehat{f}_n,f_0)\leq C^{'}\phi_n L\mathrm{log}^6n,
\end{split}
\end{equation}
and if $ \Delta_n(\widehat{f},f_0)\geq C\phi_n L\mathrm{log}^6n$, then
\begin{equation}\label{th1_eq2}
\begin{split}
\frac{1}{C^{'}}\Delta_n(\widehat{f},f_0)\leq R(\widehat{f}_n,f_0) \leq C^{'}\Delta_n(\widehat{f}_n,f_0).
\end{split}
\end{equation}
\end{theorem}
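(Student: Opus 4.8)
The plan is to adapt the oracle-inequality machinery of \cite{schmidt2020nonparametric} to the dependent setting, replacing the i.i.d.\ concentration arguments with blocking-based analogues that exploit Assumptions~1--3. The starting point is the classical decomposition of the prediction error into an approximation term and a stochastic (empirical-process) term. By the approximation result of \cite{schmidt2020nonparametric} (which is purely about the function class $\mathcal{G}(q,\mathbf{d},\mathbf{t},\boldsymbol\beta,K)$ and the network class $\mathcal{F}(L,\mathbf{p},s,F)$, hence unaffected by dependence), under conditions (i)--(iv) there exists $\tilde f\in\mathcal{F}(L,\mathbf{p},s,F)$ with $\Vert \tilde f - f_0\Vert_\infty^2 \lesssim \phi_n L$. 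It then suffices to prove a general oracle inequality of the form
\begin{equation*}
R(\widehat f_n,f_0)\;\lesssim\;\Delta_n(\widehat f_n,f_0)\;+\;\inf_{f\in\mathcal{F}}\Vert f-f_0\Vert_\infty^2\;+\;\frac{(\log N_n)\,L\log^{5}n}{n},
\end{equation*}
where $N_n$ is the covering number of $\mathcal{F}(L,\mathbf{p},s,F)$ at a suitable resolution; plugging in $\log N_n \lesssim s\log n \lesssim n\phi_n\log^2 n$ and the approximation bound yields the stated rate $\phi_n L\log^6 n$. Thus everything reduces to controlling the empirical process $\sup_{f\in\mathcal{F}}|(\mathbb{E}_n-\mathbb{E})h_f|$ where $h_f(\mathbf{x},y)=(y-f(\mathbf{x}))^2-(y-f_0(\mathbf{x}))^2$, but now the summands are temporally dependent.

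The key steps, in order, are as follows. First, I would split $h_f = h_f^{(1)}+h_f^{(2)}$ where $h_f^{(1)}(\mathbf{x},y)=(f_0(\mathbf{x})-f(\mathbf{x}))^2$ (a function of $\mathbf{X}_t$ alone) and $h_f^{(2)}(\mathbf{x},y)=2\epsilon_t(f_0(\mathbf{X}_t)-f(\mathbf{X}_t))$ (the cross term). Second, for the $\epsilon$-part I would use Assumption~3: conditioning on the $\mathbf{X}$-process, $\sum_t \epsilon_t(f_0(\mathbf{X}_t)-f(\mathbf{X}_t))$ is a martingale, so a Bernstein-type martingale inequality (using Assumption~1 to control moments of $\epsilon_t$, i.e.\ the sub-exponential/Bernstein condition) gives the needed deviation bound, uniformly over an $\epsilon$-net of $\mathcal{F}$ via a union bound; the boundedness $\Vert f\Vert_\infty\le F$ and $\Vert f_0\Vert_\infty\le K$ keeps the multiplier terms under control. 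Third, for the purely $\mathbf{X}$-dependent part $h_f^{(1)}$, I would invoke the exponential $\alpha$-mixing of Assumption~2 and apply a Bernstein inequality for $\alpha$-mixing sequences (e.g.\ via the block decomposition of Bernstein/Bradley, or Merlev\`ede--Peligrad--Rio), dividing the $n$ observations into $\asymp n/\log n$ blocks of length $\asymp \log n$ so that the mixing-induced error between blocks is polynomially small; this is exactly where the extra logarithmic factors are generated, since the effective sample size becomes $n/\log n$ and one pays another $\log n$ inside each deviation bound, producing the $\log^4 n$ inflation noted in the abstract. Fourth, combining the two uniform bounds over the net, plus a standard discretization/Lipschitz argument to pass from the net back to all of $\mathcal{F}$ (using that $x\mapsto x^2$ is Lipschitz on bounded sets and that network outputs are uniformly bounded), yields the oracle inequality. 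Finally, I would feed in the approximation bound and the covering-number bound and do the bookkeeping of logarithmic powers to land on $\phi_n L\log^6 n$; the dichotomy in \eqref{th1_eq1}--\eqref{th1_eq2} comes out by comparing the size of $\Delta_n$ against this threshold, exactly as in \cite{schmidt2020nonparametric}.

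The main obstacle I anticipate is the dependent empirical-process step: obtaining a Bernstein-type inequality for $\sup_{f\in\mathcal{F}}|(\mathbb{E}_n-\mathbb{E})h_f^{(1)}|$ that is uniform over the (large but finite-entropy) network class while losing only logarithmic, not polynomial, factors in $n$. The delicate point is choosing the block length to simultaneously (a) make the $\alpha$-mixing coupling error negligible relative to the target rate $\phi_n L\log^6 n$ — which needs block length $\gtrsim \log n$ given $\log\alpha(t)\le-\tilde c t$ — and (b) keep the variance proxy of the block sums small enough that the Bernstein bound still matches the i.i.d.\ rate up to the claimed $\log^4 n$ factor. A secondary subtlety is that the cross term involves $\epsilon_t$ which is only sub-exponential (Assumption~1), not bounded, so the martingale Bernstein inequality must be the Freedman/van de Geer version with an explicit control of the conditional moment generating function; combined with the union bound over the net this contributes one more power of $\log n$. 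Once these two concentration inequalities are in place with the right log-budget, the remaining algebra is routine and parallels \cite{schmidt2020nonparametric} essentially verbatim.
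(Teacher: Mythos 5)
Your proposal follows essentially the same route as the paper's proof: an oracle inequality over a $\delta$-net of $\mathcal{F}(L,\mathbf{p},s,F)$ in which the noise cross term $\sum_t\epsilon_t(f(\mathbf{X}_t)-f_0(\mathbf{X}_t))$ is treated as a martingale via Assumption~3 and a Freedman/de la Pe\~na--type Bernstein bound (Assumption~1 supplying the conditional moment control), the purely $\mathbf{X}$-dependent part is handled by the Merlev\`ede--Peligrad--Rio Bernstein inequality for exponentially $\alpha$-mixing sequences together with covariance-decay estimates (the paper's Lemmas~1--2 playing the role of your blocking/variance-proxy step), and the Schmidt--Hieber approximation and covering-number bounds plus the $\Delta_n$ dichotomy finish the argument exactly as you describe. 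The only discrepancy is a one-logarithm slip in your stated oracle remainder: with $\log N_n\lesssim s\log n\asymp n\phi_n\log^2 n$ your term $(\log N_n)L\log^5 n/n$ gives $\phi_n L\log^7 n$, whereas the paper's remainder is $C\log^4 n\log N_n/n$ with $\log N_n\lesssim sL\log n$, which is what lands on the claimed $\phi_n L\log^6 n$ --- a bookkeeping correction, not a structural gap.
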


Based on Theorem~\ref{th1}, the prediction error defined in \eqref{sect3_eq2} is controlled by $\phi_nL\mathrm{log}^6n$. From condition (ii), $L$ is at least of the order of $\mathrm{log}_2n$. Thus, the rate in Theorem~1 becomes $\phi_n \mathrm{log}^7n$. This rate for the case of independent observations is of order $\phi_n \mathrm{log}^3n$ based on Theorem~1 in \cite{schmidt2020nonparametric}. Compared to latter, our rate has an extra $\mathrm{log}^4 n$ factor, which compensates for additional complexity in verifying the prediction error consistency in the presence of temporal dependence among input variables. Further, note that for a fully connected neural network, the number of parameters is $\sum_{i=1}^{L-1}p_ip_{i+1} \gtrsim n^2\phi_n^2L$. We can see that this number is greater than the sparsity level $s$ which is of order $n\phi_n\mathrm{log}n$ based on condition (iv) in Theorem~\ref{th1}. Thus, it can be seen that condition (iv) restricts the neural network class to the ones with sparse architecture. In other words, at least $\sum_{i=1}^{L-1}p_ip_{i+1} - s$ units of the neural network is completely inactive.


\subsection{Time series model examples}\label{sec:ar}
In this section, we introduce some (well-known) examples of time series models that satisfy the assumptions of Theorem~\ref{th1}. The first example is to let $\{\epsilon_t\}_{t\in \mathbb{Z}}$ and $\{\mathbf{X}_t\}_{t\in \mathbb{Z}}$ be two independent processes. This independence assumption implies that the input variables $\mathbf{X}_t$ are exogenous, thus Assumption~\ref{assum3} is automatically satisfied. Further, assume $\epsilon_t$ satisfy the moment conditions in Assumption~\ref{assum1} (for example, they have normal distribution) and $\mathbf{X}_t$ is a stationary and geometrically $\alpha$-mixing process. There are many examples of such processes including certain finite-order auto-regressive processes, see e.g. \cite{fan2008nonlinear,doukhan2012mixing}. The second example is to consider non-linear auto-regressive models, i.e. assume

\begin{equation}\label{eq:non-linear ar}
    X_t = g(X_{t-1}, \cdots, X_{t-d}) + \epsilon_t.
\end{equation}


This is a special case of model \eqref{sect2_eq1} by setting $\mathbf{X}_t=(X_{t-d},\cdots,X_{t-1})$ and $Y_t = X_t$. Assuming $\epsilon_t$'s are i.i.d. random variables with positive density in the real line and the boundedness of the function $g$, it can be shown that there exists a stationary solution to equation~\eqref{eq:non-linear ar} while the solution is exponentially $\alpha$-mixing as well \cite{an1996geometrical,doukhan2012mixing}. In both examples, assumptions of Theorem~\ref{th1} are satisfied, thus the results of this Theorem are applicable.



Now, we consider a more general time series model. Recall that by the well-known Wold representation, every purely nondeterministic stationary and zero-mean stochastic process ${X_t}$ can be expressed as $X_t = \sum_{i=0}^{\infty} a_i\epsilon_{t-i}$ where $\epsilon_t$ is a mean-zero white noise. Further, if ${X_t}$ has a non-vanishing spectral density and absolute summable auto-regressive coefficients, i.e. $\sum_{i=1}^{\infty} |\phi_i| < \infty$, it has the $\text{AR}(\infty)$ representation $X_t = \sum_{i=1}^{\infty} \phi_iX_{t-i} + \epsilon_t$ (see e.g. \cite{wang2021consistent}). Motivated by this discussion, we consider a general family of times series models satisfying

\begin{equation}\label{sect3_eq3}
X_t = \sum_{i=1}^{\infty} \phi_i X_{t-i} + \epsilon_t,
\end{equation}

where $\epsilon_t$'s are i.i.d. errors. Independence among $\epsilon_t$'s is a strong assumption compared to only assuming that they are uncorrelated, but this is required for our theoretical analysis. The interesting fact about model~\eqref{sect3_eq3} is that it is a linear model. However, since there are infinite covariates in this AR($\infty$) representation, training neural networks directly is impossible. The common solution is to truncate the covariates and only consider the first few, i.e. approximate model~\eqref{sect3_eq3} by an AR(d) model for some $d$. This approximation can successfully estimate second order structures of the original model (i.e. spectral density or auto-correlation function) if $d$ is selected carefully and under certain assumptions on the AR coefficients $\phi_i$'s (see e.g. \cite{wang2021consistent}). Therefore, we follow this path and fit a neural network to the $d$-dimensional input variables $\left( X_{t-1}, \ldots, X_{t-d} \right)$ with a proper choice of $d$ while keeping in mind that the true regression function is in fact $f_0(\mathbf{X}_t) = \sum_{i=1}^{\infty} \phi_i X_{t-i}$. To establish the prediction error consistency of neural networks on truncated input variables, we need two additional assumptions.



\begin{assumption}\label{assum4}
There exists $ \alpha > 0, M>0$ such that $\sum_{i=1}^{\infty}(1+i)^{\alpha}|\phi_i| \leq M < \infty.$ 
\end{assumption}
\begin{assumption}\label{assum5}
For some constant $K>0$, $|X_t| \leq K$ for all $t \geq 0$.
\end{assumption}



Assumption~\ref{assum4} controls the decay rate of the $\text{AR}(\infty)$ coefficients in the true model and $\alpha$ can be treated as a decreasing rate of $\phi_i$'s. This assumption is needed to compensate for approximating a general time series of form \eqref{sect3_eq3} with a finite lag AR process. Further, it plays an important role in restricting the first derivative of $f_0$, which corresponds to the $\beta_i$-smoothness assumption on $g_{ij}$ in (\ref{sect2_eq2}). Note that Assumption~\ref{assum4} is satisfied if the spectral density function is strictly positive and continuous, and the auto-covariance function of $X_t$ has some bounded property \cite{kreiss2011range}. Moreover, since the model is linear (i.e. the regression function is unbounded), Assumption~\ref{assum5} becomes necessary to make $f_0(\mathbf{X}_t)$ bounded, a property needed for Theorem~\ref{th1} as well.



\begin{theorem}\label{th2}
Consider model (\ref{sect3_eq3}) with $f_0(\mathbf{X}_t) = \sum_{i=1}^{\infty} \phi_i X_{t-i}$. Let $\widehat{f}_n$ be an estimator taking values in the network class $\mathcal{F}(L,(p_i)_{i=0,\cdots,L+1},s,F)$ satisfying (\romannumeral1) $F\geq KM$; (\romannumeral2) $L \geq 4$; (\romannumeral3) $s \asymp Ld $, and (\romannumeral4) $ d \lesssim \mathrm{min}_{i=1,\cdots,L} p_i$. Assume that $d \asymp n^{\frac{1}{\alpha + 1}}$. Under the Assumptions~\ref{assum1}-\ref{assum5}, there exist positive constants $C, C^{'}$ such that
if $\Delta_n(\widehat{f},f_0)\leq C n^{-\frac{\alpha}{\alpha+1}} L\mathrm{log}^5n$, then
\begin{equation}\label{th2_stat_eq1}
\begin{split}
R(\widehat{f}_n,f_0)\leq C^{'}n^{-\frac{\alpha}{\alpha+1}} L\mathrm{log}^5n,
\end{split}
\end{equation}
and if $\Delta_n(\widehat{f},f_0) > C n^{-\frac{\alpha}{\alpha+1}} L\mathrm{log}^5n$, then
\begin{equation}\label{th2_stat_eq2}
\begin{split}
\frac{1}{C^{'}}\Delta_n(\widehat{f},f_0)\leq R(\widehat{f}_n,f_0) \leq C^{'}\Delta_n(\widehat{f},f_0).
\end{split}
\end{equation}
\end{theorem}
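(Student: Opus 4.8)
The plan is to reduce Theorem~\ref{th2} to the estimation-error machinery of Theorem~\ref{th1} after isolating the bias created by truncating the $\mathrm{AR}(\infty)$ representation to its first $d$ lags. First I would decompose the target as
\[
f_0(\mathbf{X}_t)=\sum_{i=1}^{\infty}\phi_i X_{t-i}= f_0^{(d)}(\mathbf{X}_t)+r_d,\qquad f_0^{(d)}(x_1,\dots,x_d):=\sum_{i=1}^{d}\phi_i x_i,\quad r_d:=\sum_{i>d}\phi_i X_{t-i},
\]
where $f_0^{(d)}$ is a genuinely $d$-variate linear function and $r_d$ is the truncation remainder. By Assumption~\ref{assum5} together with Assumption~\ref{assum4},
\[
\|r_d\|_{\infty}\le K\sum_{i>d}|\phi_i|\le K(1+d)^{-\alpha}\sum_{i>d}(1+i)^{\alpha}|\phi_i|\le KM(1+d)^{-\alpha},
\]
so $\|r_d\|_{\infty}^{2}\lesssim d^{-2\alpha}$, which for $d\asymp n^{1/(\alpha+1)}$ equals $n^{-2\alpha/(\alpha+1)}=o\big(n^{-\alpha/(\alpha+1)}L\,\mathrm{log}^{5}n\big)$. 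Hence the bias is negligible and the rate will be governed by the stochastic error for the $d$-dimensional linear target, with $\mathbf{X}_t=(X_{t-1},\dots,X_{t-d})$ and $Y_t=X_t$; note Assumption~\ref{assum3} holds here because $\epsilon_t$ is independent of $\{\epsilon_s:s<t\}$ and hence of $X_{t-1},\dots,X_{t-d}$, and the mixing of the window process $\{\mathbf{X}_t\}$ needed for Assumption~\ref{assum2} follows from the corresponding property of $\{X_t\}$ up to an offset by $d$.

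Second, I would verify that $f_0^{(d)}$ restricted to $[-K,K]^d$ lies in (or is approximated to within $o(n^{-1})$, which is enough) the network class $\mathcal{F}(L,\mathbf{p},s,F)$ under conditions (\romannumeral1)--(\romannumeral4). The ReLU identity $x=(x)_{+}-(-x)_{+}$ reproduces each coordinate; a coefficient $\phi_i$ with $|\phi_i|>1$ is realized by $\lceil|\phi_i|\rceil$ parallel unit-weight paths, and since $\sum_i|\phi_i|\le M$ this costs only $O(1)$ extra paths beyond the $d$ inputs; carrying the (padded) $d$-dimensional signal through all $L$ layers uses $\asymp Ld$ nonzero parameters, matching (\romannumeral3) and requiring only $L\ge 4$ and $\min_i p_i\gtrsim d$; and since $|f_0^{(d)}|\le K\sum_i|\phi_i|\le KM\le F$ by (\romannumeral1), all weights, shifts, and intermediate activations can be kept within the allowed bounds (using, if needed, an extra constant-carrying unit of constant width to keep pre-activations nonnegative). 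Consequently the \emph{approximation} error $\inf_{f\in\mathcal{F}(L,\mathbf{p},s,F)}\mathbb{E}\big[(f(\mathbf{X})-f_0^{(d)}(\mathbf{X}))^2\big]$ is zero (or $o(n^{-1})$), in sharp contrast with the $\phi_n$-sized H\"older approximation error in Theorem~\ref{th1}; this is precisely why Theorem~\ref{th1} cannot be applied as a black box, since its rate $n^{-2\beta/(2\beta+d)}$ is vacuous for a function depending on all $d$ coordinates.

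Third, I would rerun the oracle-inequality argument from the proof of Theorem~\ref{th1} with $f_0$ replaced by $f_0^{(d)}$ and with an additive bias $r_d$ sitting inside the squared loss. Expanding $Y_t-f(\mathbf{X}_t)=\epsilon_t+r_d+\big(f_0^{(d)}(\mathbf{X}_t)-f(\mathbf{X}_t)\big)$, the cross terms involving $r_d$ are dominated, uniformly over $f\in\mathcal{F}$, by $\|r_d\|_{\infty}$ times quantities already controlled in the Theorem~\ref{th1} analysis, hence contribute $O(\|r_d\|_{\infty}^{2})=O(d^{-2\alpha})$ plus strictly lower-order terms; the martingale term $\sum_t\epsilon_t\big(f_0^{(d)}(\mathbf{X}_t)-f(\mathbf{X}_t)\big)$ is handled by the Bernstein/Freedman-type inequality enabled by Assumptions~\ref{assum1} and~\ref{assum3}; and the empirical-process term $\sup_{f\in\mathcal{F}}\big|\tfrac1n\sum_t\big((f-f_0^{(d)})^2-\mathbb{E}(f-f_0^{(d)})^2\big)\big|$ is handled by the blocking/coupling argument for exponentially $\alpha$-mixing sequences (Assumption~\ref{assum2}) combined with the sup-norm covering-number bound $\log\mathcal{N}(\delta,\mathcal{F}(L,\mathbf{p},s,F),\|\cdot\|_{\infty})\lesssim s\,\mathrm{polylog}(n)$. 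Collecting terms and using $s\asymp Ld$, $d\asymp n^{1/(\alpha+1)}$ gives
\[
R(\widehat{f}_n,f_0^{(d)})\;\lesssim\;\frac{s}{n}\,\mathrm{log}^{5}n\;+\;\Delta_n(\widehat{f}_n,f_0^{(d)})\;\asymp\;L\,n^{-\alpha/(\alpha+1)}\mathrm{log}^{5}n\;+\;\Delta_n(\widehat{f}_n,f_0^{(d)}).
\]
Finally $R(\widehat{f}_n,f_0)\le 2R(\widehat{f}_n,f_0^{(d)})+2\|r_d\|_{\infty}^{2}$ absorbs the bias, and since $\Delta_n(\widehat{f}_n,f_0)$ and $\Delta_n(\widehat{f}_n,f_0^{(d)})$ differ only by $O(\|r_d\|_{\infty})$-type terms, the two-case conclusion \eqref{th2_stat_eq1}--\eqref{th2_stat_eq2} follows exactly as the corresponding dichotomy in Theorem~\ref{th1}, according to whether $\Delta_n$ is below or above $C\,n^{-\alpha/(\alpha+1)}L\,\mathrm{log}^{5}n$.

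The hard part will be the dependence-aware concentration step: coupling the overlapping-window design $\{\mathbf{X}_t\}$ to an independent block sequence and deriving the resulting Bernstein bound uniformly over the network class — this is where the extra logarithmic factors relative to the i.i.d.\ theory of \cite{schmidt2020nonparametric} are generated, and it is mildly complicated here by the fact that the window length $d\asymp n^{1/(\alpha+1)}$ itself grows with $n$. The one genuinely new bookkeeping issue is the truncation remainder $r_d$, which is correlated with the retained lags and so does not behave like clean noise; the saving grace is that $\|r_d\|_{\infty}$ is deterministically small enough ($\lesssim d^{-\alpha}$) that every term it produces is dominated by the $L\,n^{-\alpha/(\alpha+1)}\mathrm{log}^{5}n$ estimation error.
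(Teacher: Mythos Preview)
Your proposal is correct and lands on the same rate via the same two ingredients the paper uses --- the sup-norm covering bound $\log N_n\asymp dL\log n$ together with $s\asymp Ld$, $d\asymp n^{1/(\alpha+1)}$, and a ReLU construction of the truncated linear map $\sum_{i\le d}\phi_i x_i$ --- but your route is more circuitous than necessary. The paper does \emph{not} split $f_0=f_0^{(d)}+r_d$ and rerun the oracle inequality with a contaminated noise $\epsilon_t+r_d$; instead it applies Lemma~\ref{lemma3} directly with the full $f_0$ and lets the truncation bias be swallowed whole by the approximation term $\inf_{f^*\in\mathcal{F}}\|f^*-f_0\|_\infty^2$, since for the network $\tilde f=\sum_{i\le d}\phi_i x_i$ one has $\|\tilde f-f_0\|_\infty=\|\sum_{i>d}\phi_iX_{t-i}\|_\infty\le K\sum_{i>d}|\phi_i|\lesssim d^{-\alpha}$. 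This sidesteps entirely the issue you flag as ``the one genuinely new bookkeeping issue'' (that $r_d$ is correlated with the retained lags and so is not clean noise): because Lemma~\ref{lemma3} already controls everything through a \emph{deterministic} sup-norm bound on $f^*-f_0$, no separate treatment of cross terms or of $\Delta_n(\widehat f_n,f_0)$ versus $\Delta_n(\widehat f_n,f_0^{(d)})$ is needed. Your decomposition buys a slightly sharper bias of order $d^{-2\alpha}$ rather than the paper's $d^{-\alpha}$, but since both are dominated by the estimation error $dL\log^5 n/n\asymp n^{-\alpha/(\alpha+1)}L\log^5 n$ at the balanced choice of $d$, this gains nothing for the stated theorem. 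In short: same strategy, but you can shorten your argument considerably by feeding the full $f_0$ into the oracle inequality and reading the truncation off the approximation term.
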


Based on Theorem~\ref{th2}, the best convergence rate of $R(\widehat{f}_n,f_0)$ for model~\eqref{sect3_eq3} is $n^{-\frac{\alpha}{\alpha+1}}\mathrm{log}^5n$. Compared with the convergence rate in Theorem~\ref{th1}, we can see that the convergence rate in model~\eqref{sect3_eq3} depends on the decreasing rate of coefficients $\phi_i$'s instead of the smoothness of regression function. Also the logarithmic factor changes from $\mathrm{log}^7n$ to $\mathrm{log}^5n$. Such a subtle decrease is due to the fact that since the truncated model is linear, a shallower and sparser neural network can be used in the proof of Theorem~\ref{th2} (as seen from conditions (\romannumeral2) and (\romannumeral3) in the statement of Theorem~\ref{th2}). Also, note that for a simple linear AR(d) model where $\phi_j = 0$ for $j>d$, Assumption~\ref{assum4} is satisfied for any large $\alpha$. Thus, in this case, the best convergence rate of prediction error becomes $n^{-1}\mathrm{log}^5n$.

\begin{remark}
To fit the neural network to a truncated model~\eqref{sect3_eq3}, shallow neural networks are sufficient. In fact, $L = 4$ is enough based on Theorem~\ref{th2}. This is because model~\eqref{sect3_eq3} has a simple linear structure compared to composition functions \eqref{model_eq2}.
\end{remark}

\begin{remark}
For a general selection of input vector $d$, the result of Theorem~\ref{th2} becomes $R(\widehat{f},f_0)\leq
C\frac{1}{d^{\alpha}} + C^{'}\frac{dL\mathrm{log}^5n}{n}+ 4\Delta_n(\widehat{f},f_0)$. With the choice of $d\asymp n^{\frac{1}{\alpha + 1}}$, we balance the first two terms and establish a proper upper bound for $R(\widehat{f}_n,f_0)$. Thus, this selection can be regarded as the ``optimal selection" of lag $d$ when applying neural networks for estimation in model~\eqref{sect3_eq3}.

\end{remark}

\section{Simulation experiments}\label{sec:sim}

In this section, we conduct two simulation settings to illustrate the performance of neural networks applied to temporally dependent data (see also Section~C in the supplementary materials for numerical comparisons between feed-forward neural networks and LSTM). In the first simulation in Section~\ref{sec:sim:1}, we aim to compare the convergence rate in dependent observations setting with the rate in independent observations setting. The convergence rate is explained as how fast the mean squared predictive error decreases as the sample size grows. In the second simulation (Section~\ref{sec:sim:2}), motivated by time series model examples introduced in Section~\ref{sec:ar}, we use neural networks to fit linear and non-linear auto-regressive models and compare the performance with the result of linear regression method (least squares method). All simulations are repeated 200 times.

To train the neural network, we split the data into three parts: training set  $\mathcal{T}_1 $, validation set $\mathcal{T}_2$, and testing set $\mathcal{T}_3$. We set $\mathcal{T}_1 = \{1,\cdots,n/2\}$, $\mathcal{T}_2 = \{n/2,\cdots,3n/4\}$, and $\mathcal{T}_3 = \{3n/4,\cdots,n\}$. We penalize parameters of weight matrix of the neural network and use mean square error as our loss. To be more specific, the loss function is defined as
\begin{equation}\label{sect4_eq1}
    \text{loss} := \sum_{t \in \mathcal{T}_1}(Y_t - \widehat{Y}_t)^2/(n/2) + \lambda \sum_{i=1}^L\|W_i\|_1,
\end{equation}
where $\lambda$ is the sparsity tuning parameter. The value of $\lambda$ does not have significant effect on simulation results. In this section, we set $\lambda = 0.1$. We apply gradient descent method to update parameters of neural network and stop the iteration when the neural network gives the minimum mean square error for $\mathcal{T}_2$. The prediction error $R(\widehat{f}_n, f_0)$ defined in \eqref{sect3_eq2} is empirically estimated by $\widehat{R} = \sum_{t \in \mathcal{T}_3}(f_0(\mathbf{X}_t) - \widehat{Y}_t)^2/(n/4)$. We use $\widehat{R}$ to evaluate the performance of the trained neural network. 

\subsection{Dependent vs. independent observations}\label{sec:sim:1}
In this experiment, we consider a nonlinear additive model $f_0(\mathbf{X}_t) = 2\mathop{\sum}_{i=1}^{4} \cos(X_{t,i})$. Thus, $Y_t$ is from the following model
\begin{equation}\label{eq4_1}
 Y_t = 2\mathop{\sum}_{i=1}^{4} \cos(X_{t,i}) + \eta_t  ,\quad t=1,2,\cdots, n,
\end{equation}

with $\eta_t$'s as independent standard normal random variables, and $\mathbf{X}_t:=(X_{t,1},\cdots, X_{t,4}) $ is generated from an AR(4) model. Specifically,
\begin{equation*}
\mathbf{X}_t = 
\begin{bmatrix}
0  & \rho & 0 & 0 \\
0  & 0 & \rho & 0 \\
0  & 0 & 0  &\rho \\
0  & 0 & 0  & 0
\end{bmatrix}
\mathbf{X}_{t-1} + \boldsymbol\epsilon_t, \quad \boldsymbol\epsilon_t\sim N(\mathbf{0}, 
\mathbf{I}_4),
\end{equation*}

where $\rho$ takes two value, $\rho = 0.2,\ 0.6$. The sample size is $n = 100$, $400$, $1600$, $6400$. A three layer neural network is selected to fit model (\ref{eq4_1}) while ReLU is used as the activation function. The network requires a 4- dimensional input vector and has 20 units in each hidden layer. To make a fair comparison with the performance of neural network in independent observations setting, we still use model \eqref{eq4_1} but with independent observations, i.e. we generate $\mathbf{X}^{'}_1,\cdots,\mathbf{X}^{'}_n\sim N(\mathbf{0},\boldsymbol\Sigma)$ i.i.d. and $\boldsymbol\Sigma$ derives from $\mathbf{X}^{'}_1 \overset{D}{=} \mathbf{X}_1 $.

\begin{figure}[!ht]
\centering
\subfigure[$\rho = 0.2$]{
\begin{minipage}[t]{0.45\linewidth}
\centering
\includegraphics[width=2.2in]{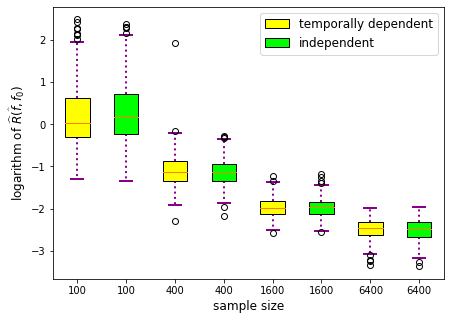}
\end{minipage}%
}%
\subfigure[$\rho = 0.6$]{
\begin{minipage}[t]{0.45\linewidth}
\centering
\includegraphics[width=2.2in]{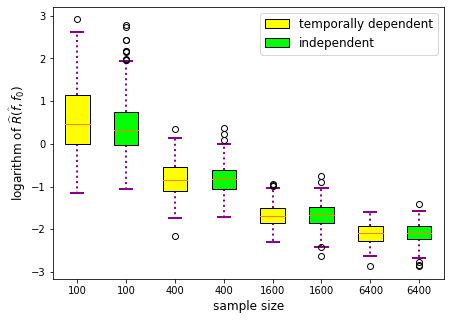}
\end{minipage}%
}%
\centering
\caption{Box plots of logarithm of the mean square error on the testing set as a function of sample size.}
\label{fig1}
\end{figure}

Figure \ref{fig1} illustrates $\log \left( \widehat{R} \right)$ against various sample sizes for both temporally dependent case and independent case by running 200 replications for each. As can be seen, the prediction performance of neural network improves as sample size increases. Further, we observe that the prediction error for the neural network with temporally dependent data and independent one are similar. This may imply that the additional logarithmic terms appearing in Theorem~\ref{th1} might be an artifact of the proof.

\subsection{Auto-regressive examples}\label{sec:sim:2}
In this experiment, we test the performance of the neural network and compare its result with simple linear regression (least squares method) for several linear and non-linear auto-regressive models. Specifically, we consider the following four models: (1) $X_{t} = 0.6 X_{t-1} + \epsilon_t$; (2) $X_{t} = 0.6 X_{t-1} - 0.4X_{t-2} + 0.2X_{t-3}  + \epsilon_t$; (3) $X_t = 0.5 \sqrt{|X_{t-1}|} + \epsilon_t $; (4) $X_t = 0.5|X_{t-1}| + \epsilon_t$. The error term is generated as $\epsilon_1, \cdots, \epsilon_n\sim N(0,1)$ i.i.d. in all models. The sample size is  $n = 100$, $400$, $1600$, $6400$.

We again use a four layer neural network with 20 units in each hidden layer. This time, we have no prior knowledge on the input dimension of the neural network, since lags of these four time series models are unknown. To determine the input dimension (lag of time series), we apply linear regression with AIC criterion. The AIC is defined as  $\text{AIC} = n \mathrm{log}(SSE) + 2d $, in which $d$ is the input dimension and $SSE$ is the summation of squared errors. 

\begin{figure}[!ht]
\centering
\subfigure[model (1)]{
\begin{minipage}[t]{0.45\linewidth}
\centering
\includegraphics[width=2.2in]{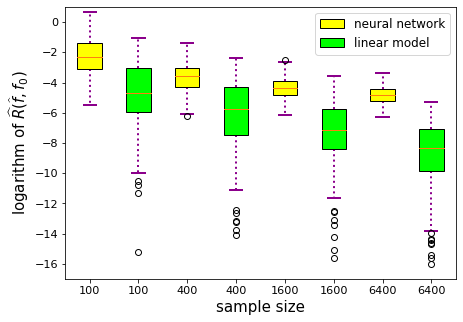}
\end{minipage}%
}%
\subfigure[model (2)]{
\begin{minipage}[t]{0.45\linewidth}
\centering
\includegraphics[width=2.2in]{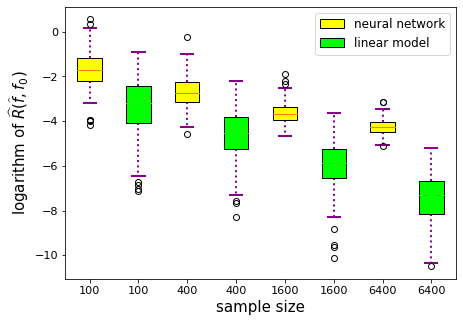}
\end{minipage}%
}%
\\
\centering
\subfigure[model (3)]{
\begin{minipage}[t]{0.45\linewidth}
\centering
\includegraphics[width=2.2in]{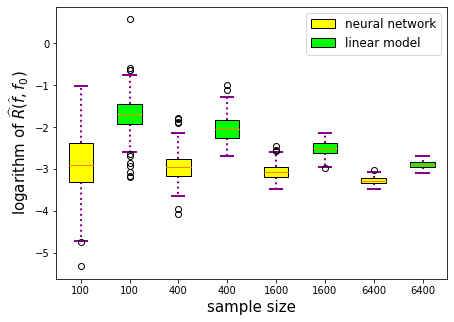}
\end{minipage}%
}%
\subfigure[model (4)]{
\begin{minipage}[t]{0.45\linewidth}
\centering
\includegraphics[width=2.2in]{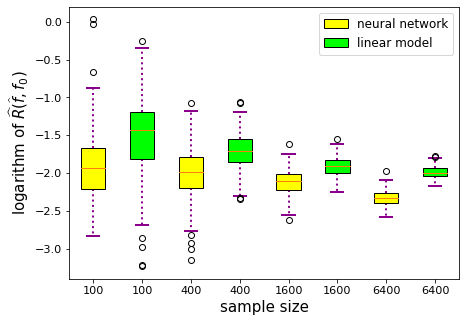}
\end{minipage}%
}%
\centering
\caption{Box plots of logarithm of the mean square error on the testing set as a function of sample size.}
\label{fig2}
\end{figure}

Figure \ref{fig2} displays the logarithm of $\widehat{R}(\widehat{f},f_0)$ as sample size increases. Linear regression method has a fast convergence rate in AR(1) and AR(3) models, i.e. models (1) and (2). For non-linear auto-regressive models (models (3) and (4)), the convergence rate of linear regression method becomes slow while neural network outperforms the linear method in these cases. In summary, neural network has the advantage to deal with estimation in both linear and non-linear auto-regressive models.

\section{Real data analysis}\label{sec:real}

In this section, we apply neural network on a macroeconomic data\footnote{link : https://www.sydneyludvigson.com/data-and-appendixes} to predict monthly inflation rate. The data consist of 132 monthly macroeconomic variables from January 1960 to December 2011, a total of 624 time periods. The inflation rate is measured by the percentage changes of the Consumer Price Index (CPI). To be more specific, we are interested in forecasting
\begin{equation}\label{sect5_eq1}
    \pi_{t} = 1200 \times  \mathrm{log}\left(\frac{\text{CPI}_{t+1}}{\text{CPI}_t}\right).
\end{equation}
Similar to \cite{medeiros2016l1,wu2016performance}, we use the AR(4) model as the benchmark model, i.e. $\pi_{t+1} = \alpha_0 + \sum_{i=0}^{3}\alpha_i \pi_{t-i}$. Thus, the overall model can be written as
\begin{equation}\label{sect5_eq2}
   \pi_{t+1} = f(\pi_t.\pi_{t-1},\pi_{t-2},\pi_{t-3},\mathbf{x}_t) + v_{t+1},
\end{equation}

where $\mathbf{x}_{t}$ consists of other 131 predictors in the data set along with three of their lags. Therefore, $\mathbf{x}_t$ is a 524-dimensional vector.
We consider one step ahead forecasts computed in a rolling window scheme with 453 observations. Since there are some missing values in the first four years, we start from 1964 to fit model and predict the inflation rate. We use the
observations between May 1964 and Jan 2002 to fit (\ref{sect5_eq2}) and predict the monthly inflation rate starting at time Feb 2002.

We apply the feature screening method Sure Independent Screening (SIS) proposed in \cite{fan2008sure} to reduce the input dimension and select only the top $\gamma\%$ important variables from $\mathbf{x}_t$ for $\gamma = 0, 5, 7.5, 10$. The case of $\gamma = 0$ implies that we do not select any covariates from $\mathbf{x}_t$ and only use four lags of $\pi_t$ as predictors. Then we use the selected covariates and four lags of $\pi_t$ as predictors or the input vector of our neural network. The neural network is selected to have ReLU as the activation function and 100 units in each hidden layers. Two different depths of neural network are selected, one with 3 layers (1 hidden layers) and the other with 6 layers (4 hidden layers). To train the neural network, we use dropout method for hidden layers and gradient descent to update parameters. In each rolling window, 453 observations are divided into two parts: the first 300 observations is treated as training set while the last 153 observations is treated as validation set. The loss function is again (\ref{sect4_eq1}) with $\lambda = 0.1$. We stop training when MSE over the validation set reaches its minimum. The performance is evaluated by MSE of one step ahead forecast. We also show the performance of linear regression (i.e. least squares method) with the same input as neural network. 


The results are summarized in Table~\ref{tab:my_table1} while Figure \ref{fig4} in the supplementary materials plots the predicted values against the truth. As seen from Table~\ref{tab:my_table1}, neural network has a better performance compared to linear regression method. In other words, the overall prediction error of neural network with three layers is small compared to the ones for linear regression (except for $\gamma = 7.5$). Specifically, in the case of $\gamma = 10$ (i.e. including 52 covariates from vector $\mathbf{x}_t$), the performance of neural network remains satisfactory while linear regression method suffers from large input vector dimension. Finally, note that three layers seems to be enough for this data as increasing the number of layers does not reduce the prediction error.


\begin{table}[h]
    \centering
    \begin{tabular}{| c | c | c | c |}
         \hline
  & neural network (3 layers) & neural network (6 layers) & linear regression \\
  \hline
$\gamma\%$ = $0\% $ & $\bold{15.71}$ & 17.44 &  $18.05^{*}$\\
$\gamma\%$ = $5\% $ & $\bold{15.65}$ & 18.37 & 17.05\\
$\gamma\%$ = $7.5\%$& 16.40 & 19.67 & $\bold{15.42}$\\
$\gamma\%$ = $10\%$ & $\bold{16.51}$ & 20.70 & 172.49\\
     \hline
    \end{tabular}
    \caption{Prediction error of inflation rate, i.e. the mean square of the forecasting error. Since the result of neural network depends on initialization, we averaged results across 10 replicates. The entry with asterisk corresponds to the benchmark model (i.e. AR(4) model).} 
    \label{tab:my_table1}
\end{table}

\section{Conclusion}\label{sec:conclusion}

Considering nonparametric regression model with the regression function belonging to a specific family of bounded composite functions, we analyzed the performance of deep feed-forward neural networks with ReLU activation function on estimating such functions. Consistency of prediction error is established under mild conditions on the input data which can include temporal dependence among observations. Interestingly, the consistency rate matches the one for independent data with additional logarithmic factors with respect  to sample size. This result is applicable to a wide range of linear and non-linear time series models including finite lag non-linear auto-regressive models. Then, the result is extended to include a general family of stationary time series models utilizing the Wold decomposition while the consistency rate depends on the decay rate of $\text{AR}(\infty)$ representation coefficients. Relaxing some assumptions in the theoretical analysis including the mixing condition and boundedness of observations in the case of general time series models are interesting future directions. Another limitation of the work is considering only the ReLU as the activation function while extending the results to a more general family of activation functions is a fruitful research direction.




\bibliographystyle{unsrt}  
\bibliography{ref}

\newpage

\renewcommand{\thesection}{\Alph{section}}

\setcounter{section}{0}
\section*{Appendix}

We provide proofs of main theorems in Section~\ref{supp:sec:sim:1.2} while some useful Lemmas are stated and proved in Section~\ref{supp:sec:sim:1.1}. In Section \ref{supp:sec:sim:2}, additional details related to the numerical experiments are provided. We also provide a numerical comparison between feed-forward deep neural networks (DNN) and Long Short-Term Memory networks (LSTM) in Section \ref{supp:sec:sim:3}.



\section{Proofs in Section 3}\label{supp:sec:sim:1}
\subsection{Useful Lemmas and their proofs}\label{supp:sec:sim:1.1}

\begin{lemma}\label{lemma1}
Assume that $\{\mathbf{X}_t\}_{t\in \mathbb{Z}}$ satisfies Assumption \ref{assum2}, that is, $\{\mathbf{X}_t\}_{t\in \mathbb{Z}}$ is a stationary and exponential $\alpha-$mixing process. Further, assume $f$ is a measurable function satisfying $\Vert f(\cdot)\Vert_{\infty} < M$. Then 
\[
|\mathrm{Cov}(f(\mathbf{X}_i),f(\mathbf{X}_j))| \leq 11M^2\mathrm{exp}(-c|i-j|/3),
\]
for some positive constant $c$ depending on $\tilde{c}$.
\end{lemma}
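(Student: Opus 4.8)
The plan is to obtain this from a classical covariance inequality for $\alpha$-mixing sequences (Davydov's inequality). First I would assume without loss of generality that $i<j$ and set $s:=j-i\ge 1$; the case $i=j$ is trivial since $\mathrm{Var}(f(\mathbf{X}_i))\le \mathbb{E}[f(\mathbf{X}_i)^2]\le M^2\le 11M^2$. Since $\mathbf{X}_i$ belongs to the ``past'' $\sigma$-field $\sigma(\mathbf{X}_i^{-})$ and $\mathbf{X}_j=\mathbf{X}_{i+s}$ belongs to the ``future'' $\sigma$-field $\sigma(\mathbf{X}_{i+s}^{+})$, the random variables $f(\mathbf{X}_i)$ and $f(\mathbf{X}_j)$ are measurable with respect to $\sigma(\mathbf{X}_i^{-})$ and $\sigma(\mathbf{X}_{i+s}^{+})$ respectively. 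Hence, directly from the definition in Assumption~\ref{assum2}, the $\alpha$-mixing coefficient between the two $\sigma$-fields generated by $f(\mathbf{X}_i)$ and $f(\mathbf{X}_j)$ is at most $\alpha(s)$.

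Next I would invoke Davydov's covariance inequality: for $\sigma$-fields $\mathcal{A},\mathcal{B}$ with mixing coefficient $\alpha$, any $\mathcal{A}$-measurable $X\in L^p$ and $\mathcal{B}$-measurable $Y\in L^q$ satisfy $|\mathrm{Cov}(X,Y)|\le C\,\alpha^{1/r}\,\Vert X\Vert_p\Vert Y\Vert_q$ whenever $1/p+1/q+1/r=1$, where the universal constant $C$ can be taken to be at most $11$ (see e.g.\ the version in \cite{doukhan2012mixing}). Since $f$ is merely bounded, the natural choice is the symmetric one $p=q=r=3$, for which $1/p+1/q+1/r=1$; applying the inequality with $X=f(\mathbf{X}_i)$, $Y=f(\mathbf{X}_j)$ and using $\Vert f(\cdot)\Vert_\infty<M$ (so $\Vert f(\mathbf{X}_i)\Vert_3\le M$ and likewise for $j$) gives
\[
|\mathrm{Cov}(f(\mathbf{X}_i),f(\mathbf{X}_j))|\le 11\,\alpha(s)^{1/3}\,\Vert f(\mathbf{X}_i)\Vert_3\Vert f(\mathbf{X}_j)\Vert_3\le 11 M^2\,\alpha(s)^{1/3}.
\]

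Finally I would insert the exponential mixing rate: Assumption~\ref{assum2} gives $\log(\alpha(s))\le -\tilde c s$, i.e.\ $\alpha(s)\le e^{-\tilde c s}$, hence $\alpha(s)^{1/3}\le e^{-\tilde c s/3}$, and recalling $s=|i-j|$ yields $|\mathrm{Cov}(f(\mathbf{X}_i),f(\mathbf{X}_j))|\le 11M^2\exp(-\tilde c|i-j|/3)$, which is the claim with $c=\tilde c$. I do not expect any genuine obstacle: the lemma is a repackaging of a classical estimate, and the only points needing care are (i) matching the $\sigma$-fields so that the coefficient $\alpha(|i-j|)$ governs the pair $f(\mathbf{X}_i),f(\mathbf{X}_j)$, and (ii) tracking the numerical constant so that it stays $\le 11$ — the exponent $1/3$ in the final bound is precisely the exponent $1/r=1/3$ forced by the symmetric choice $p=q=r=3$, which is the right choice when $f$ is only assumed bounded.
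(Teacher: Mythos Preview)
Your argument is correct and reaches the stated bound, but it proceeds along a genuinely different route from the paper. The paper does not invoke Davydov's inequality as a black box; instead it gives an elementary, self-contained derivation: it discretizes the range $[-M,M]$ into $2m$ bins, approximates $\mathbb{E}[f(\mathbf{X}_i)f(\mathbf{X}_j)]$ and $\mathbb{E}[f(\mathbf{X}_i)]\mathbb{E}[f(\mathbf{X}_j)]$ by Riemann-type sums (each with error $O(M^2/m)$), applies the raw $\alpha$-mixing bound to the indicators $\{a_k\le f(\mathbf{X}_i)<a_{k+1}\}$ and $\{b_l\le f(\mathbf{X}_j)<b_{l+1}\}$ (incurring a $4m^2M^2e^{-c|i-j|}$ term), and then optimizes over $m$ by choosing $m=\lfloor e^{c|i-j|/3}\rfloor$. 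The exponent $1/3$ in the final bound thus arises for the paper from balancing $1/m$ against $m^2e^{-c|i-j|}$, whereas in your argument it is the Davydov exponent $1/r$ forced by $p=q=r=3$.

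What each approach buys: yours is quicker and makes transparent that the lemma is just a specialization of a classical mixing covariance inequality; the paper's is fully self-contained and explains where the constants $11$ and $1/3$ come from without appealing to an external reference. One small caveat on your write-up: the numerical constant in Davydov's inequality varies across sources (common values are $8$, $10$, or $12$), so if you want to land exactly on $11$ you should pin down the precise version you are citing; otherwise state the bound with a generic absolute constant. It is also worth noting---though not required for the lemma as stated---that since $f$ is bounded you could even use the sharper Ibragimov-type bound $|\mathrm{Cov}|\le 4\alpha(s)M^2$ (the case $p=q=\infty$, $r=1$), which would remove the $1/3$ altogether; the paper's formulation with the $1/3$ exponent is what downstream lemmas use, so matching it is fine.
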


\begin{proof}
Consider the set of grid points 
\begin{equation*}
 \begin{split}
&D(m):=\{(a_k,b_l),\ k,l=1,2,\cdots,2m+1 \}\\
&a_k= M(k-m-1)/m ,\ b_l= M(l-m-1)/m, 
\end{split}
\end{equation*}
where $m$ is some positive integer. Then we have
\begin{equation}\label{supple_eq1}
    \begin{split}
&\bigg|\mathbb{E}(f(\mathbf{X}_i)f(\mathbf{X}_j)) - \sum_{k=1}^{2m}\sum_{l=1}^{2m}(\frac{a_k+a_{k+1}}{2})(\frac{b_l+b_{l+1}}{2})\mathbb{P}(a_k\leq f(\mathbf{X}_i)<a_{k+1}, b_l\leq f(\mathbf{X}_j) <b_{l+1})\bigg|\\
&\leq
\sum_{k=1}^{2m}\sum_{l=1}^{2m}\mathbb{P}(a_k\leq f(\mathbf{X}_i)<a_{k+1}, b_l\leq f(\mathbf{X}_j) <b_{l+1})\frac{M^2}{m}\\
&= \frac{M^2}{m}.
    \end{split}
\end{equation}
Similar to (\ref{supple_eq1}) we can prove that
\begin{equation*}
    \begin{split}
\bigg| \mathbb{E}[f(\mathbf{X}_i)] - \sum_{k=1}^{2m}\frac{a_k+a_{k+1}}{2}\mathbb{P}(a_k\leq f(\mathbf{X}_i)<a_{k+1}) \bigg| \leq \frac{M}{2m}.
    \end{split}
\end{equation*}

To simplify the notation, we let $A_i:=\sum_{k=1}^{2m}\frac{a_k+a_{k+1}}{2}\mathbb{P}(a_k\leq f(\mathbf{X}_i)<a_{k+1})$. Then, we have

\begin{equation}\label{supple_eq2}
\begin{split}
\bigg| \mathbb{E}[f(\mathbf{X}_i)]\mathbb{E}[f(\mathbf{X}_j)] - A_iA_j \bigg|
&\leq
\bigg| \mathbb{E}[f(\mathbf{X}_i)](\mathbb{E}[f(\mathbf{X}_j)] - A_j) \bigg| 
+
\bigg| (\mathbb{E}[f(\mathbf{X}_j)] - A_j)A_i \bigg|\\
&\leq
\frac{M^2}{2m}+\frac{M}{2m}(M+\frac{M}{2m}) = \frac{M^2}{m}+\frac{M^2}{4m^2}.
\end{split}
\end{equation}

Since  $\{\mathbf{X}_t\}_{t\in \mathbb{Z}}$ is an exponential $\alpha-$mixing sequence, we know that there exists some positive constant $c$ such that
\begin{equation*}
    \begin{split}
&\mathbb{P}(a_k\leq f(\mathbf{X}_i)<a_{k+1}, b_l\leq f(\mathbf{X}_j) <b_{l+1})\\
&\leq
\mathrm{exp}(-c|i-j|)+\mathbb{P}(a_k\leq f(\mathbf{X}_i)<a_{k+1})\mathbb{P}(b_l\leq f(\mathbf{X}_j) <b_{l+1}).
    \end{split}
\end{equation*}

Therefore

\begin{equation}\label{supple_eq3}
    \begin{split}
&\bigg|\sum_{k=1}^{2m}\sum_{l=1}^{2m}(\frac{a_k+a_{k+1}}{2})(\frac{b_l+b_{l+1}}{2})\mathbb{P}(a_k\leq f(\mathbf{X}_i)<a_{k+1}, b_l\leq f(\mathbf{X}_j) <b_{l+1}) - A_iA_j\bigg|\\
 &\leq
\sum_{k=1}^{2m}\sum_{l=1}^{2m}|(\frac{a_k+a_{k+1}}{2})(\frac{b_l+b_{l+1}}{2})|\mathrm{exp}(-c|i-j|) \\
&\leq 4m^2M^2\mathrm{exp}(-c|i-j|).
    \end{split}
\end{equation}

From (\ref{supple_eq1}),(\ref{supple_eq2}) and (\ref{supple_eq3}), we have that
\begin{equation*}
\begin{split}
\bigg| \mathbb{E}[f(\mathbf{X}_i)f(\mathbf{X}_j)] - \mathbb{E}[f(\mathbf{X}_i)]\mathbb{E}[f(\mathbf{X}_j)] \bigg| \leq \frac{2M^2}{m} + \frac{M^2}{4m^2} + 4m^2M^2\mathrm{exp}(-c|i-j|).
\end{split}
\end{equation*}

With the choice of $m=\lfloor \mathrm{exp}(c|i-j|/3) \rfloor$,

\begin{equation*}
\begin{split}
\bigg| \mathbb{E}[f(\mathbf{X}_i)f(\mathbf{X}_j)] - \mathbb{E}[f(\mathbf{X}_i)]\mathbb{E}[f(\mathbf{X}_j)] \bigg| &\leq 10M^2\mathrm{exp}(-c|i-j|/3)  + \frac{M^2}{4}\mathrm{exp}(-2c|i-j|/3) \\
&\leq 11M^2\mathrm{exp}(-c|i-j|/3).
\end{split}
\end{equation*}
\end{proof}

\begin{lemma}\label{lemma2}
Let $\{\mathbf{X}_t\}_{t\in \mathbb{Z}}$ and $f$ be as in lemma \ref{lemma1} while $\{a_n\}_{n\in\mathbb{Z}}$ is a sequence of real numbers such that $a_n\leq n^{\alpha}$ for some positive $\alpha$. Let $Y_{ni} = a_nf(\mathbf{X}_i)$, then
\[
\mathrm{Var}(Y_{n0}) + 2\sum_{i>0}|\mathrm{Cov}(Y_{n0},Y_{ni})|
\leq
([24\alpha\mathrm{log}(n)/c]+3)\mathrm{Var}(Y_{n0}) + 22M^2\frac{1}{n^{2\alpha}(\mathrm{exp}(\frac{c}{3})-1)},
\]
where c is the same constant as in lemma \ref{lemma1} depending on $\tilde{c}$ (the $\alpha$-mixing exponent).
\end{lemma}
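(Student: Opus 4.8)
The plan is to split the sum over lags $i>0$ at a threshold $K$ of order $\mathrm{log}(n)$: the near-diagonal covariances are controlled crudely by the variance (Cauchy--Schwarz plus stationarity), and the far-off-diagonal ones by the exponential covariance decay established in Lemma~\ref{lemma1}.

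First I would note that for each fixed $n$ the sequence $\{Y_{ni}\}_{i\in\mathbb{Z}}=\{a_n f(\mathbf{X}_i)\}_{i\in\mathbb{Z}}$ is strictly stationary, being a fixed measurable transformation of the strictly stationary process $\{\mathbf{X}_t\}$; hence $\mathrm{Var}(Y_{ni})=\mathrm{Var}(Y_{n0})$ for all $i$, and by Cauchy--Schwarz $|\mathrm{Cov}(Y_{n0},Y_{ni})|\le\mathrm{Var}(Y_{n0})$. Consequently, for any integer $K\ge 1$,
\[
\mathrm{Var}(Y_{n0})+2\sum_{i=1}^{K}|\mathrm{Cov}(Y_{n0},Y_{ni})|\le(2K+1)\,\mathrm{Var}(Y_{n0}).
\]
For the tail $i>K$, I would write $|\mathrm{Cov}(Y_{n0},Y_{ni})|=a_n^2\,|\mathrm{Cov}(f(\mathbf{X}_0),f(\mathbf{X}_i))|$, apply Lemma~\ref{lemma1} and the bound $a_n^2\le n^{2\alpha}$ to obtain $|\mathrm{Cov}(Y_{n0},Y_{ni})|\le 11M^2 n^{2\alpha}\,\mathrm{exp}(-ci/3)$, and sum the resulting geometric series:
\[
2\sum_{i>K}|\mathrm{Cov}(Y_{n0},Y_{ni})|\le 22M^2 n^{2\alpha}\sum_{i>K}\mathrm{exp}(-ci/3)=22M^2 n^{2\alpha}\,\frac{\mathrm{exp}(-cK/3)}{\mathrm{exp}(c/3)-1}.
\]

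It then remains to calibrate $K$ so both pieces reproduce the stated constants. The choice $K=\lceil 12\alpha\,\mathrm{log}(n)/c\rceil$ gives $\mathrm{exp}(-cK/3)\le\mathrm{exp}(-4\alpha\,\mathrm{log}(n))=n^{-4\alpha}$, so the tail is at most $22M^2 n^{-2\alpha}/(\mathrm{exp}(c/3)-1)$, while $2K+1\le 24\alpha\,\mathrm{log}(n)/c+3\le[24\alpha\,\mathrm{log}(n)/c]+3$ with the rounding convention used in the paper. Adding the two contributions yields the claimed inequality.

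I do not expect a genuine obstacle: once Lemma~\ref{lemma1} is available the argument is essentially bookkeeping. The only point requiring a little care is the calibration of $K$, so that the logarithmic coefficient of $\mathrm{Var}(Y_{n0})$ and the exponent of $n$ in the residual term come out with exactly the constants stated; one should also observe that the near-diagonal block needs only $\mathrm{Var}(Y_{n0})$ (which already carries the $a_n^2$ factor), so the hypothesis $a_n^2\le n^{2\alpha}$ is used solely to bound the exponentially small tail.
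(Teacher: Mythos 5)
Your proposal is correct and follows essentially the same route as the paper: truncate the covariance sum at a threshold of order $\mathrm{log}(n)/c$ (the paper uses $k=[12\alpha\,\mathrm{log}(n)/c]+2$, you use $\lceil 12\alpha\,\mathrm{log}(n)/c\rceil$), bound the near-diagonal terms by $\mathrm{Var}(Y_{n0})$ via Cauchy--Schwarz and stationarity, and control the tail with Lemma~\ref{lemma1}, the bound $a_n^2\le n^{2\alpha}$, and a geometric series. The only differences are immaterial bookkeeping in where the split starts and the rounding of the logarithmic coefficient, which you already flag.
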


\begin{proof}
Let  $k=[\frac{12\alpha}{c}\mathrm{log}(n)]+2$. Notice that $Y_{ni} < n^{\alpha}M$ for all $i\in \mathbb{Z}$. From lemma \ref{lemma1}, we have that
\begin{equation*}
\begin{split}
2\sum_{i\geq k}|\mathrm{Cov}(Y_{n0},Y_{nk})|
\leq
22\sum_{i\geq k}(n^{\alpha}M)^2\mathrm{exp}(-ci/3)
=
22n^{2\alpha}M^2\frac{\mathrm{exp}(-\frac{c}{3}(k-1))}{\mathrm{exp}\frac{c}{3}-1}\\
\leq
22M^2\frac{1}{n^{2\alpha}\mathrm{exp}(\frac{c}{3}-1)}.
\end{split}
\end{equation*}

Therefore

\begin{equation*}
\begin{split}
\mathrm{Var}(Y_{n0})+2\sum_{i> 0}|\mathrm{Cov}(Y_{n0},Y_{nk})|\leq
(2k-1)\mathrm{Var}(Y_{n0}) + 22M^2\frac{1}{n^{2\alpha}(\mathrm{exp}(\frac{c}{3})-1)}.
\end{split}
\end{equation*}
\end{proof}

Let $\mathcal{F}$ be a class of function. We define $\mathcal{N} (\delta, \mathcal{F}, \Vert \cdot \Vert_{\infty})$ to be the covering number, that is, the minimal number of $\Vert \cdot \Vert_{\infty}$-balls with radius $\delta$ that covers $\mathcal{F}$.

\begin{lemma}\label{lemma3}
Consider the d-variate nonparametric regression model with unknown regression function $f_0$, $Y_i=f_0(X_i)+\epsilon_i$, satisfying Assumptions~1-3. Let $\widehat{f}$ be any estimator taking values in $\mathcal{F}$. Define
\[
\Delta_n := \Delta_n(\widehat{f},f_0,\mathcal{F}) := \mathbb{E}_{f_0}\left[\frac{1}{n}\sum_{i=1}^{n}(Y_i-\widehat{f}(\mathbf{X}_i))^2 -\mathop{\mathrm{inf}}_{f\in \mathcal{F}} \frac{1}{n}\sum_{i=1}^{n}(Y_i-f(\mathbf{X}_i))^2 \right]
\]
and assume $\{f_0\}\cup \mathcal{F}\subset \{f: [0,1]^d\rightarrow [-F,F]\}$ for some $F\geq 1$. If $N_n:= \mathcal{N}(\delta,\mathcal{F},\Vert\cdot\Vert_{\infty})\geq 3$, then,
\begin{equation*}
\begin{split}
&(1-\epsilon)^2 \Delta_n - C(F,\sigma^2,c)\frac{\mathrm{log}^4n\mathrm{log}N_n}{n\epsilon}-\delta C(F,\sigma^2,c)\mathrm{log}^2n\\
&\leq R(\widehat{f},f_0)\leq\\
&(1+\epsilon)^2\left(\mathop{\mathrm{inf}}_{f^{*}\in \mathcal{F}} \Vert f^{*}-f_0\Vert_{\infty}^{2}+ \Delta_n(\widehat{f},f)+C(F,\sigma^2,c)\delta \mathrm{log}^2n\right)\\
&+
\frac{(1+\epsilon)^3}{\epsilon}C(F,\sigma^2,c)\frac{\mathrm{log}^4n\mathrm{log}N_n}{n},
\end{split}
\end{equation*}

where $C(F,\sigma^2,c)$ is defined as a constant depending only on $F,\sigma^2$, and $c$ (the same constant in lemma \ref{lemma1}).
\end{lemma}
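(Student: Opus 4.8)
The plan is to follow the oracle-inequality route of \cite{schmidt2020nonparametric}, but to replace every step where independence of the $\mathbf{X}_i$ was used by a blocking/mixing argument, paying for it with the extra $\log$ factors. First I would set up the standard decomposition: for any $f^{*}\in\mathcal{F}$, write
\begin{equation*}
R(\widehat f,f_0)=\mathbb{E}_{f_0}\Big[\tfrac1n\textstyle\sum_i(\widehat f(\mathbf X_i)-f_0(\mathbf X_i))^2\Big]+\Big(R(\widehat f,f_0)-\mathbb{E}_{f_0}\big[\tfrac1n\textstyle\sum_i(\widehat f(\mathbf X_i)-f_0(\mathbf X_i))^2\big]\Big),
\end{equation*}
so the task splits into (a) bounding the empirical $L^2$ error of $\widehat f$ in terms of $\Delta_n$, $\inf_{f^{*}}\Vert f^{*}-f_0\Vert_\infty^2$ and a stochastic remainder, and (b) bounding the gap between the empirical and population versions of the squared error, uniformly over a $\delta$-net of $\mathcal F$. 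Part (a) is algebraic: expanding squares and using $Y_i=f_0(\mathbf X_i)+\epsilon_i$ together with Assumption~3 (so that $\sum_i\epsilon_i(\widehat f-f^{*})(\mathbf X_i)$ is a martingale transform) reduces everything to controlling cross terms of the form $\frac1n\sum_i\epsilon_i\big(g(\mathbf X_i)-\mathbb{E}g(\mathbf X_i)\big)$ and $\frac1n\sum_i\big(g(\mathbf X_i)^2-\mathbb{E}g(\mathbf X_i)^2\big)$ for $g$ ranging over a net.

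The heart of the proof is the concentration step. For a fixed bounded $g$ with $\Vert g\Vert_\infty\le 2F$, I would bound $\mathrm{Var}\big(\frac1n\sum_i g(\mathbf X_i)\big)$ via Lemma~\ref{lemma1}, and more sharply the weighted/truncated sums via Lemma~\ref{lemma2}: the covariance sum $\mathrm{Var}(Y_{n0})+2\sum_{i>0}|\mathrm{Cov}(Y_{n0},Y_{ni})|$ is, up to the $(\,[24\alpha\log n/c]+3\,)$ factor, of the same order as in the i.i.d.\ case, which is precisely where one $\log n$ enters. To go from a variance bound to an exponential tail I would use a Bernstein-type inequality for $\alpha$-mixing sequences (e.g.\ via Bernstein blocks: split $1,\dots,n$ into alternating big and small blocks of length $\sim\log n$, couple the big blocks with independent copies at cost $n\alpha(\log n)$ which is negligible under exponential mixing, and apply the classical Bernstein inequality to the independentized blocks). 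This is also where Assumption~1 is used: the sub-exponential control on $\epsilon_i$ makes $\epsilon_i g(\mathbf X_i)$ sub-exponential, so its block sums satisfy a Bernstein bound with the same $m!c^{m-2}$-type moment growth. The accumulation of: one $\log n$ from the mixing variance inflation (Lemma~\ref{lemma2}), one $\log n$ from the block length, and a further $\log^2 n$ from converting the moment/tail bound into a deviation statement and from the discretization error $\delta$, produces the $\log^4 n$ prefactor and the $\mathrm{log}^2 n$ on the $\delta$-term in the statement.

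Then I would carry out the union bound over the $\delta$-net of size $N_n$: each net element costs a $\log N_n$ factor after the exponential tail bound, giving the $\frac{\log^4 n\,\log N_n}{n}$ term; the approximation-by-net error is absorbed into the $\delta C(F,\sigma^2,c)\log^2 n$ terms using $\Vert g\Vert_\infty\le 2F$ and Lipschitzness of $x\mapsto x^2$ on $[-2F,2F]$. Finally, combining (a) and (b) and inserting the elementary inequality $(u-v)^2\le(1+\epsilon)u^2+(1+\epsilon^{-1})v^2$ (applied with the right choices to peel off the $(1\pm\epsilon)$ factors and move the stochastic remainder to the correct side) yields both the lower and upper bounds in the stated two-sided form; the $\inf_{f^{*}}\Vert f^{*}-f_0\Vert_\infty^2$ term appears because $\widehat f$ is only compared to the best net/class element, not to $f_0$ directly.

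The main obstacle, I expect, is the concentration step for the mixing sequence done \emph{uniformly} over the net with explicit, trackable constants: one must ensure that the coupling error from independentizing the Bernstein blocks (of order $n\exp(-\tilde c\log n)=n^{1-\tilde c}$ times a bound on the integrand) is genuinely dominated by the claimed $n^{-1}\log^4 n\log N_n$ rate — this forces the block length to scale like $\log n$ with the correct constant tied to $\tilde c$ (hence to $c$ in Lemmas~\ref{lemma1}–\ref{lemma2}), and it is the place where the bookkeeping is most delicate. Everything else — the square-expansion algebra, the martingale observation from Assumption~3, the $\delta$-net approximation — is routine once the mixing Bernstein inequality is in hand.
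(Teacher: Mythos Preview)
Your roadmap captures the overall architecture --- decompose into an empirical-vs-population comparison plus a control of noise cross-terms, discretize $\mathcal{F}$ via a $\delta$-net, and pay $\log N_n$ for the union bound --- but there is a genuine gap at the concentration step that would prevent you from reaching the stated fast-rate bound $\tfrac{\log^4 n\,\log N_n}{n}$.

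The issue is localization. You propose to bound, uniformly over the net, quantities of the form $\tfrac{1}{n}\sum_i\big(g(\mathbf X_i)^2-\mathbb{E}g(\mathbf X_i)^2\big)$ and $\tfrac{1}{n}\sum_i\epsilon_i g(\mathbf X_i)$ with $g$ a net element. A direct Bernstein bound (blocking or otherwise) plus union bound over $N_n$ elements yields deviations of order $F^2\sqrt{\tfrac{\log^a n\,\log N_n}{n}}$, i.e.\ the \emph{slow} rate; the Young-type inequality you invoke at the end cannot convert a square-root remainder into a $1/n$ remainder. The paper (following \cite{schmidt2020nonparametric}) avoids this by self-normalizing: it introduces $\gamma_j:=\sqrt{n^{-1}\log N_n}\vee\mathbb{E}^{1/2}[(f_j(\mathbf X)-f_0(\mathbf X))^2]$, bounds the maxima $T_1=\max_j|\sum_i g_j/\gamma_j F|$ and $T_2=\max_j|\sum_i\epsilon_i(f_j-f_0)/\gamma_j|$, and then uses Cauchy--Schwarz together with $\gamma^{*}\le\sqrt{n^{-1}\log N_n}+R^{1/2}+\delta$ to obtain inequalities of the shape $|R-\widehat R_n|\le 2\sqrt{R}\,c+d$. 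This is what permits the algebraic step \eqref{eq3} to produce the $(1\pm\epsilon)$ sandwich with a genuinely $O(n^{-1})$ remainder. Without this normalization your argument stalls at the slow rate.

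Two further points of divergence are worth flagging. First, for the mixing concentration the paper does not use blocking/coupling; it applies the Bernstein inequality of Merlev\`ede--Peligrad--Rio for exponentially $\alpha$-mixing sequences directly, with Lemma~\ref{lemma2} supplying the variance proxy $\tilde v_j^2$. Your blocking route would also work, but the accounting of $\log n$ factors is different and you would need to redo it carefully. Second, for the noise cross-term $\sum_i\epsilon_i(f_j-f_0)(\mathbf X_i)$ the paper does \emph{not} rely solely on the martingale structure: it splits on the event $\{\sigma^2\sum_i Z_{ij}^2\le x\sqrt{n/\log N_n}\}$, applies a martingale Bernstein inequality (de~la~Pe\~na) on that event, and controls the complementary event via a mixing Bernstein bound on $\sum_i Z_{ij}^2$. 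This two-part decomposition is essential because the conditional variance $\sum_i Z_{ij}^2$ is itself random under dependence, and your sketch does not account for it.
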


\begin{proof}[Proof.]
Throughout the proof we write $\mathbb{E}= \mathbb{E}_{f_0}$. Define $\Vert g\Vert_n^2:= \frac{1}{n}\sum_{i=1}^{n}g(\mathbf{X}_i)^2 $. For any estimator $\tilde{f}$, we introduce $\widehat{R}_n(\tilde{f},f_0):= \mathbb{E}[\Vert \tilde{f}-f_0 \Vert_n^2]$ for the empirical risk. In the first step, we show that we can
restrict ourselves to the case $\mathrm{log}N_n \leq n$. Since $R(\widehat{f},f_0)\leq 4F^2 $, the upper bound trivially holds if $\mathrm{log}N_n\geq n$. To see that also the lower bound is trivial in this case, let $\tilde{f}\in \mathrm{argmin}_{f\in\mathcal{F}}\sum_{i=1}^{n}(Y_i-f(\mathbf{X}_i))^2$ be a (global) empirical
risk minimizer. Observe that
\begin{equation*}
\begin{split}
\widehat{R}_n(\widehat{f},f_0) - \widehat{R}_n(\tilde{f},f_0) = \Delta_n + \mathbb{E}\left[\frac{2}{n}\sum_{i=1}^{n}\epsilon_i\widehat{f}(\mathbf{X}_i)\right] - \mathbb{E}\left[ \frac{2}{n}\sum_{i=1}^{n}\epsilon_i\tilde{f}(\mathbf{X}_i)\right].
\end{split}
\end{equation*}

From this equation, it follows that $\Delta_n \leq 8F^2$ and this implies the lower
bound in the statement of the lemma for $\mathrm{log} N_n \geq n$. We may therefore
assume $\mathrm{log} N_n \leq n$. The proof is divided into four parts which are denoted
by (\uppercase\expandafter{\romannumeral1}$\sim$\uppercase\expandafter{\romannumeral4}) \\
(\uppercase\expandafter{\romannumeral1}): We relate the risk $R( \widehat{f}, f_0)= \mathbb{E}[(\widehat{f}(\mathbf{X}) - f_0(
\mathbf{X}))^2]$ to its empirical
counterpart  $\widehat{R}_n( \widehat{f}, f_0)$ via the inequalities

\begin{equation*}
\begin{split}
&(1-\epsilon)\widehat{R}_n(\widehat{f},f_0) -C_F\frac{\mathrm{log}^2n\mathrm{log}N_n}{n} -C_F \delta\mathrm{log}^2n
-\frac{1}{\epsilon} \frac{C_FF^2\mathrm{log}^4n\mathrm{log}N_n}{n}\\
&\leq R(\widehat{f},f_0)\leq\\
&(1+\epsilon)\widehat{R}_n(\widehat{f},f_0)+(1+\epsilon)\left( C_F\frac{\mathrm{log}^2n\mathrm{log}N_n}{n} + C_F\delta \mathrm{log}^2n  \right)\\
&+\frac{(1+\epsilon)^2}{\epsilon} \frac{C_FF^2\mathrm{log}^4n\mathrm{log}N_n}{n},
\end{split}
\end{equation*}
where $C_F$ is some constant which depends on $F $ and $c$.\\

(\uppercase\expandafter{\romannumeral2}): For any estimator $\tilde{f}$ taking values in $\mathcal{F}$,
\begin{equation*}
\begin{split}
\mathbb{E}\left[\bigg|\frac{2}{n}\sum_{i=1}^{n}\epsilon_i\tilde{f}(\mathbf{X}_i)\bigg|\right]
&\leq 
\delta C(F,\sigma^2,c)\mathrm{log}^2n + C(F,\sigma^2,c)\mathrm{log}^4n\frac{\mathrm{log}N_n}{n}\\
&\quad\quad\quad
+C(F,\sigma^2,c)\sqrt{\frac{\mathrm{log}^4n\mathrm{log}N_n}{n}}\widehat{R}_n^{1/2}(\tilde{f},f_0).
\end{split}
\end{equation*}

(\uppercase\expandafter{\romannumeral3}): We have
\begin{equation*}
\begin{split}
\widehat{R}_n(\widehat{f},f_0)
&\leq
(1+\epsilon)\bigg[
\mathrm{inf}_{f\in\mathcal{F}}\mathbb{E}[(f(\mathbf{X})-f_0(\mathbf{X}))^2]
+\Delta_n+
\delta C(F,\sigma^2,c)\mathrm{log}^2n \\
&\quad\quad
+ 
C(F,\sigma^2,c)\mathrm{log}^4n\frac{\mathrm{log}N_n}{n}\bigg]
+
\frac{(1+\epsilon)^2}{\epsilon}C^2(F,\sigma^2,c)\mathrm{log}^4n\frac{\mathrm{log}N_n}{n}. 
\end{split}
\end{equation*}

(\uppercase\expandafter{\romannumeral4}): We have
\begin{equation*}
\begin{split}
\widehat{R}_n(\widehat{f},f_0)\geq
(1-\epsilon)(\Delta_n- C(F,\sigma^2,c)\frac{\mathrm{log}^4n\mathrm{log}N_n}{n\epsilon}-2\delta C(F,\sigma^2,c)\mathrm{log}^2n).
\end{split}
\end{equation*}

Combining (\uppercase\expandafter{\romannumeral1}) and (\uppercase\expandafter{\romannumeral4}) gives the lower bound of the assertion. The upper bound follows from (\uppercase\expandafter{\romannumeral1}) and (\uppercase\expandafter{\romannumeral3}).\\

(\uppercase\expandafter{\romannumeral1}): Given a minimal $\delta$-covering of $\mathcal{F}$, denote the centers of the balls by
$f_j$ . By construction there exists a (random) $j^{*}$ such that $\Vert \widehat{f}-f_{j^{*}}\Vert_{\infty}\leq \delta$. Without loss of generality, we can assume that $\Vert f_j\Vert_{\infty}\leq F $. Generate i.i.d. random variables $\{\mathbf{X}_i^{'}$, $i=1,\cdots,n\}$ with the same distribution as $\mathbf{X}$ ($\mathbf{X}\overset{D}{=}\mathbf{X}_i$) and independent of $\{\mathbf{X}_i,i=1,\cdots,n\}$. Using that $\Vert f_j\Vert_{\infty}, \Vert f_0\Vert_{\infty}, \delta \leq F$,
\begin{equation*}
\begin{split}
&|R(\widehat{f},f_0)-\widehat{R}_n(\widehat{f},f_0)|\\
&= \bigg|\mathbb{E}\left[\frac{1}{n}\sum_{i=1}^{n}(\widehat{f}(\mathbf{X}_i^{'})-f_0(\mathbf{X}_i^{'}))^2-\frac{1}{n}\sum_{i=1}^{n}(\widehat{f}(\mathbf{X}_i)-f_0(\mathbf{X}_i))^2\right]\bigg|\\
&\leq 
\mathbb{E}\left[\bigg|\frac{1}{n}\sum_{i=1}^{n}g_{j^{*}}(\mathbf{X}_i,\mathbf{X}_i^{'}) \bigg|\right] + 9\delta F,
\end{split}
\end{equation*}


with $g_{j^{*}}(\mathbf{X}_i,\mathbf{X}_i^{'}):=(f_{j^{*}}(\mathbf{X}_i^{'})-f_0(\mathbf{X}_i^{'}))^2-(f_{j^{*}}(\mathbf{X}_i)-f_0(\mathbf{X}_i))^2$. Define $g_j$ in the same way with $f_j^{*}$ replaced by $f_j$. Similarly, set $\gamma_j :=\sqrt{n^{-1}\mathrm{log}N_n} \vee \mathbb{E}^{1/2}[(f_j(\mathbf{X})-f_0(\mathbf{X}))^2]$ and define $\gamma^{*}$ as 
\begin{equation}\label{supp_eq0}
\begin{split}
\gamma^{*} 
&= \sqrt{n^{-1}\mathrm{log}N_n}\vee \mathbb{E}^{1/2}[(f_{j^{*}}(\mathbf{X})-f_0(\mathbf{X}))^2|\{(\mathbf{X}_i,Y_i)\}_{i=1}^{\infty}]\\
&\leq \sqrt{n^{-1}\mathrm{log}N_n} + \mathbb{E}^{1/2}[(\widehat{f}(\mathbf{X})-f_0(\mathbf{X}))^2|\{(\mathbf{X}_i,Y_i)\}_{i=1}^{\infty}]+\delta,
\end{split}
\end{equation}

where the last part follows from triangle inequality and $f_{j^{*}}-\widehat{f} \leq \delta $.\\
For random variables $U_1$, $T_1$, Cauchy-Schwarz inequality gives $\mathbb{E}[U_1T_1]\leq \mathbb{E}^{1/2}[U_1^2]\mathbb{E}^{1/2}[T_1^2] $. Choose $U_1=\mathbb{E}^{1/2}[(\widehat{f}(\mathbf{X})-f_0(\mathbf{X}))^2|\{(\mathbf{X}_i,Y_i)\}_{i=1}^{\infty}]$ and $T_1 = \mathop{\mathrm{max}}_{j}|\sum_{i=1}^{n}g_j(\mathbf{X}_{i},\mathbf{X}_{i^{'}})/\gamma_j F| $. Using that $\mathbb{E}[U_1^2] = R( \widehat{f}, f_0)$,

\begin{equation}\label{eq1}
    \begin{split}
&| R(\widehat{f},f_0) - \widehat{R}_n(\widehat{f},f_0) |\\
&\leq
\frac{F}{n}R(\widehat{f},f_0)^{\frac{1}{2}}\mathbb{E}^{\frac{1}{2}}[T_1^2] + \frac{F}{n}(\sqrt{\frac{\mathrm{log}N_n}{n}}+\delta)\mathbb{E}[T_1] + 9\delta F.
    \end{split}
\end{equation}

Next, we need to estimate the upper bound for $\mathbb{E}[T]$ and $\mathrm{E}[T^2]$. To simplify the notation, we let $v_j^2=\mathrm{Var}(g_j(\mathbf{X}_i,\mathbf{X}_i^{'})/\gamma_j F)$ and $\tilde{v}_j^2 = \mathrm{Var}\left(\frac{g_j(\mathbf{X}_i,\mathbf{X}_i^{'})}{\gamma_jF}\right) + 2\sum_{k>i}\mathrm{Cov}\left( \frac{g_j(\mathbf{X}_k,\mathbf{X}_k^{'})}{\gamma_jF}, \frac{g_j(\mathbf{X}_i,\mathbf{X}_i^{'})}{\gamma_jF}\right)$. We know that $ |g_j(\mathbf{X}_i,\mathbf{X}_i^{'})/F|\leq 4F$ and $1/\gamma_j\leq n^{1/2}$. From lemma \ref{lemma2}, we can derive

\begin{equation*}
    \begin{split}
\tilde{v}_j^2\leq
([12\mathrm{log}(n)/c]+3)v_j^2+22(4F)^2\frac{1}{n(\mathrm{exp}(c/3)-1)}.
    \end{split}
\end{equation*}

Since $v_j^2 = 2\mathrm{Var}((f_j(\mathbf{X}_i)-f_0(\mathbf{X}_i))^2/\gamma_jF)
\leq 2\mathbb{E}[(f_j(\mathbf{X}_i)-f_0(
\mathbf{X}_i))^4]/(\gamma_j^2F^2)
\leq 8$, 
we conclude that $\tilde{v}_j^2\leq C_1\mathrm{log}(n) + C_2F^2/n$, where $C_1$ and $C_2$ are constants which only depend on $c$. Observe that $\mathbb{E}[g_j(\mathbf{X}_i,\mathbf{X}_i^{'})]=0$, $|\frac{g_j(X_i,X_i^{'})}{\gamma_jF}|\leq 4F/\gamma_j$. Also, we know that $\{g_j(\mathbf{X}_i,\mathbf{X}_{i^{'}})/\gamma_jF,\ i=1,\cdots,n\}$ is an exponentially $\alpha$-mixing process. By Bernstein inequality (Theorem 2, \cite{merlevede2009bernstein}) with a union bound over $j$, we have

\begin{equation*}
    \begin{split}
&\mathbb{P}(T_1\geq t) \leq 1 \wedge \left(2N_n \mathop{\mathrm{max}}_j\mathrm{exp}\left(-\frac{\tilde{c}t^2}{n\tilde{v}_j^2 +\frac{16F^2}{\gamma_j^2}+t\frac{4F}{\gamma_j}(\mathrm{log}n)^2} \right)\right)\\
&\leq
1 \wedge \left(2N_n \mathop{\mathrm{max}}_j\mathrm{exp}\left(-\frac{\tilde{c}t^2}{C_1n\mathrm{log}(n) +C_2F^2 +16F^2\frac{n}{\mathrm{log}(N_n)}+4tF(\mathrm{log}n)^2\sqrt{\frac{n}{\mathrm{log}(N_n)}}} \right)\right)\\
&\leq
1 \wedge \left(2N_n \mathop{\mathrm{max}}_j\mathrm{exp}\left(-\frac{\tilde{c}t^2}{n(C_1\mathrm{log}(n) + (16+C_2)F^2)+4tF(\mathrm{log}n)^2\sqrt{\frac{n}{\mathrm{log}(N_n)}}} \right)\right).
\end{split}
\end{equation*}

Therefore we can estimate the upper of $\mathbb{E}[T_1]$ by $\mathbb{P}(T_1>t)$
\begin{equation*}
\begin{split}
\mathbb{E}[T_1] 
&= \int_0^{\infty}\mathbb{P}(T_1>t)dt\\
&\leq
\theta\sqrt{n\mathrm{log}(N_n)} +2N_n\int_{\theta\sqrt{n\mathrm{log}(N_n)}}^{\infty}\mathrm{exp}\left(-\frac{\tilde{c}t}{\frac{((16+C_2)F^2+C_1\mathrm{log}(n))n}{\theta\sqrt{n\mathrm{log}(N_n)}}+4\sqrt{\frac{n}{\mathrm{log}(N_n)}}(\mathrm{log}n)^2F}\right)\\
&=
\theta\sqrt{n\mathrm{log}(N_n)} + 2N_n\frac{1}{\tilde{c}}\left(\frac{((16+C_2)F^2+C_1\mathrm{log}(n))n}{\theta\sqrt{n\mathrm{log}(N_n)}}+4\sqrt{\frac{n}{\mathrm{log}(N_n)}}(\mathrm{log}n)^2F\right)\\
&\quad\quad\quad\quad\quad\quad\quad\quad
\mathrm{exp}\left(-\frac{\tilde{c}\theta\sqrt{n\mathrm{log}(N_n)}}{\frac{((16+C_2)F^2+C_1\mathrm{log}(n))n}{\theta\sqrt{n\mathrm{log}(N_n)}}+4\sqrt{\frac{n}{\mathrm{log}(N_n)}}(\mathrm{log}n)^2F}\right).
    \end{split}
\end{equation*}

Let $\theta = \sqrt{\frac{(32+2C_2)F^2+2C_1\mathrm{log}(n)}{\tilde{c}}} \vee \frac{8F(\mathrm{log}n)^2}{\tilde{c}}$, we have
\begin{equation*}
    \begin{split}
\mathbb{E}[T] 
&\leq
\theta\sqrt{n\mathrm{log}(N_n)} + \left(\frac{(32+2C_2)F^2+2C_1\mathrm{log}(n)}{\theta\tilde{c}} +\frac{8F(\mathrm{log}n)^2}{\tilde{c}} \right)\sqrt{\frac{n}{\mathrm{log}(N_n)}}\\
&=
\theta \sqrt{n\mathrm{log}(N_n)} +A_1\sqrt{\frac{n}{\mathrm{log}(N_n)}}.
    \end{split}
\end{equation*}

Next we estimate the upper bound of $\mathbb{E}[T_1^2] $ in a similar way.
\begin{equation*}
\begin{split}
&\mathbb{E}[T_1^2] = \int_0^{\infty}\mathbb{P}(T_1>\sqrt{t})dt\\
&\leq \theta^2 n \mathrm{log}(N_n) + 2N_n\int_{\theta^2 n\mathrm{log}(N_n)}^{\infty}\mathrm{exp}\left(-\frac{\tilde{c}\sqrt{t}}{\frac{((16+C_2)F^2+C_1\mathrm{log}(n))n}{\theta\sqrt{n\mathrm{log}(N_n)}}+4\sqrt{\frac{n}{\mathrm{log}(N_n)}}(\mathrm{log}n)^2F}\right) dt.
\end{split}
\end{equation*}

Using the fact that $\int_{b^2}^{\infty} \mathrm{exp}(-\sqrt{u}a)du = (2ab+1)\mathrm{exp}(-ab)/a^2$ and with the same choice of $\theta$ as above, we estimate the upper bound of $\mathbb{E}[T_1]$. Also, we can prove that
\begin{equation*}
\begin{split}
\mathbb{E}[T_1^2]
&\leq 
\theta^2n\mathrm{log}(N_n)+6n\left(\frac{(16+C_2)F^2+C_1\mathrm{log}(n)}{\theta\tilde{c}^2}+\frac{4\mathrm{log}^2(n)F}{\tilde{c}^2}\right)\\
&=
\theta^2n\mathrm{log}(N_n) + 3nA_1/\tilde{c}\\
&\leq
(\theta^2 + \frac{3A_1}{\tilde{c}})n\mathrm{log}N_n.
\end{split}
\end{equation*}

With eq(\ref{eq1}) and the upper bound for $\mathbb{E}T_1$ and $\mathbb{E}T_1^2$, we have
\begin{equation}\label{eq2}
\begin{split}
|R(\widehat{f},f_0)-\widehat{R}_n(\widehat{f},f_0)|
&\leq
\frac{F}{n}R(\widehat{f},f_0)^{\frac{1}{2}}\sqrt{\theta^2n\mathrm{log}(N_n)+3nA_1/\tilde{c}} +\\
& \quad \quad \frac{F}{n}(\sqrt{\frac{\mathrm{log}N_n}{n}}+\delta)\left(\theta \sqrt{n\mathrm{log}N_n} +A_1\sqrt{\frac{n}{\mathrm{log}N_n}}\right) + 9\delta F\\
&=
\frac{F}{n}R(\widehat{f},f_0)^{\frac{1}{2}}\sqrt{\theta^2n\mathrm{log}(N_n)+3nA_1/\tilde{c}} +\frac{\theta F \mathrm{log}N_n}{n}+\frac{A_1F}{n}+\\
&\quad \quad F\delta(\theta\sqrt{\frac{\mathrm{log}N_n}{n}}+A_1\sqrt{\frac{1}{n\mathrm{log}N_n}}+9)\\
&\leq
\frac{F}{n}R(\widehat{f},f_0)^{\frac{1}{2}}\sqrt{(\theta^2 + \frac{3A_1}{\tilde{c}})n\mathrm{log}N_n}+\frac{(\theta+A_1)F\mathrm{log}N_n}{n}+\\
&\quad \quad(\theta + A_1+9)F\delta.
\end{split}
\end{equation}

Since there exists some constant such that $\frac{(\theta+A_1+9)F}{\mathrm{log}^2n} \vee \frac{\theta^2+3A_1/\tilde{c}}{\mathrm{log}^4n}\leq C_F$, where $C_F$ depends on $F$ and $c$, eq(\ref{eq2}) can be simplified as 
\begin{equation*}
\begin{split}
|R(\widehat{f},f_0)-\widehat{R}_n(\widehat{f},f_0)|
\leq
&\frac{F}{n}R(\widehat{f},f_0)^{\frac{1}{2}}\sqrt{C_F n\mathrm{log}^4n\mathrm{log}N_n}+C_F\frac{\mathrm{log}^2n\mathrm{log}N_n}{n}+\\
&\quad \quad C_F \delta\mathrm{log}^2n.
\end{split}
\end{equation*}

From eq(43) in \cite{schmidt2020nonparametric}, we know that for positive real numbers $a, b, c, d$ being such that $|a-b|\leq 2\sqrt{a}c+d$, then for any $0<\epsilon<1$,

\begin{equation}\label{eq3}
\begin{split}
(1-\epsilon)b-d-\frac{c^2}{\epsilon}
\leq
a
\leq
(1+\epsilon)(b+d)+\frac{(1+\epsilon)^2}{\epsilon}c^2.
\end{split}
\end{equation}

Using (\ref{eq3}) with $a=R(\widehat{f},f_0)$ and $b=\widehat{R}(\widehat{f},f_0)$, we can derive the following bounds for $R(\widehat{f},f_0)$ from (\ref{eq2}):

\begin{equation*}
\begin{split}
&(1-\epsilon)\widehat{R}_n(\widehat{f},f_0) -C_F\frac{\mathrm{log}^2n\mathrm{log}N_n}{n} -C_F \delta\mathrm{log}^2n
-\frac{1}{\epsilon} \frac{C_FF^2\mathrm{log}^4n\mathrm{log}N_n}{n}\\
&\leq R(\widehat{f},f_0)\leq\\
&(1+\epsilon)\widehat{R}_n(\widehat{f},f_0)+(1+\epsilon)\left( C_F\frac{\mathrm{log}^2n\mathrm{log}N_n}{n} + C_F\delta \mathrm{log}^2n  \right)\\
&+\frac{(1+\epsilon)^2}{\epsilon} \frac{C_FF^2\mathrm{log}^4n\mathrm{log}N_n}{n}.
\end{split}
\end{equation*}

(\uppercase\expandafter{\romannumeral2}): Similar to  the proof of (\uppercase\expandafter{\romannumeral1}), there exists a random $j^{*}$ such that $\Vert f_{j^{*}}-\tilde{f} \Vert_{\infty}\leq \delta$. We have $|\mathbb{E}[\sum_{i=1}^n\epsilon_i(\tilde{f}(\mathbf{X}_i)-f_{j^{*}}(\mathbf{X}_i))]| \leq \delta \mathbb{E}[\sum_{i=1}^n|\epsilon_i|]\leq n\delta$. Since $\mathbb{E}[\epsilon_if_0(\mathbf{X}_i)]= \mathbb{E}[\mathbb{E}[\epsilon_if_0(X_i)|X_i]] = 0$, we also find

\begin{equation}\label{eq4}
\begin{split}
\bigg| \mathbb{E}[\frac{2}{n}\sum_{i=1}^n \epsilon_i\tilde{f}(\mathbf{X}_i)]\bigg|
&=\bigg| \mathbb{E}[\frac{2}{n}\sum_{i=1}^n\epsilon_i(\tilde{f}(\mathbf{X}_i)-f_0(\mathbf{X}_i))] \bigg| \\
&\leq
2\delta + \frac{2}{n}\mathbb{E}\bigg| \sum_{i=1}^n \epsilon_i(f_{j^{*}}(\mathbf{X}_i)-f_0(\mathbf{X}_i)) \bigg|.
\end{split}
\end{equation}

Recall that $\gamma_j:=  \sqrt{n^{-1}\mathrm{log}N_n}\vee\mathbb{E}^{1/2}[(f_j(\mathbf{X})-f(\mathbf{X}))^2]$ and the definition of $\gamma^{*}$ is 

\begin{equation*}
\begin{split}
\gamma^{*} 
&= \sqrt{n^{-1}\mathrm{log}N_n}\vee \mathbb{E}^{1/2}[{(f_{j^{*}}(\mathbf{X})-f_0(\mathbf{X}))^2|\{(\mathbf{X}_i,Y_i)\}_{i=1}^{\infty}}]\\
&\leq
\sqrt{n^{-1}\mathrm{log}N_n}+\mathbb{E}^{1/2}[(\tilde{f}(\mathbf{X})-f_0(\mathbf{X}))^2|\{(\mathbf{X}_i,Y_i)\}_{i=1}^{\infty}]+\delta.
\end{split}
\end{equation*}

Using Cauchy-Schwarz inequality $\mathbb{E}[U_2T_2]\leq \mathbb{E}^{1/2}[U_2^2]\mathbb{E}^{1/2}[T_2^2]$ with $T_2=\mathrm{max}_j|\frac{\epsilon_j(f_j(\mathbf{X}_i)-f_0(\mathbf{X}_i))}{\gamma_j} | $ and $U_2 = \mathbb{E}^{1/2}[(\tilde{f}(\mathbf{X})-f_0(\mathbf{X}))^2|\{(\mathbf{X}_i,Y_i)\}_{i=1}^{\infty}]$, we have that

\begin{equation}\label{eq5}
\begin{split}
\mathbb{E}\bigg| \sum_{i=1}^n\epsilon_i(f_{j^{*}}(\mathbf{X}_i)-f_0(\mathbf{X}_i)) \bigg|
&= 
\mathbb{E}\bigg| \frac{\sum_{i=1}^n\epsilon_i(f_{j^{*}}(\mathbf{X}_i)-f_0(\mathbf{X}_i))}{\gamma^{*}}\cdot\gamma^{*} \bigg|\\
&\leq
\mathbb{E}[|T_2(\sqrt{n^{-1}\mathrm{log}N_n}+ \mathbb{E}^{1/2}[(\tilde{f}(\mathbf{X})-f_0(\mathbf{X}))^2|\{(\mathbf{X}_i,Y_i)\}_{i=1}^{\infty}] +\delta )|]\\
&\leq
\mathbb{E}[T_2](\delta+\sqrt{n^{-1}\mathrm{log}N_n})+ \mathbb{E}^{1/2}[T_2^2]\mathbb{E}^{1/2}[(\tilde{f}(\mathbf{X})-f_0(\mathbf{X}))^2].
\end{split}
\end{equation}

Notice that $\mathbb{E}^{1/2}[(\tilde{f}(\mathbf{X})-f_0(\mathbf{X}))^2] = R^{1/2}(\widehat{f},f_0)$. Now, using eq(\ref{eq4}), eq(\ref{eq5}),

\begin{equation}\label{eq6}
\begin{split}
\mathbb{E}\bigg| \frac{2}{n}\sum_{i=1}^n\epsilon_i\tilde{f}(\mathbf{X_i}) \bigg|
\leq
2\delta  + \frac{2}{n}\mathbb{E}[T_2](\delta+\sqrt{n^{-1}\mathrm{log}N_n}) + \frac{2}{n}\mathbb{E}^{1/2}[T_2^2]R^{1/2}(\tilde{f},f_0).
\end{split}
\end{equation}

Let $Z_{ij}:= \frac{f_j(\mathbf{X}_i)-f_0(\mathbf{X}_i)}{\gamma_{j}}$. From $\gamma_j\geq \sqrt{n^{-1}\mathrm{log}N_n}$ and $\gamma_j\geq \mathbb{E}^{1/2}[(f_j(\mathbf{X})-f(\mathbf{X}))^2]$, we know that (1) $|Z_{ij}|\leq 2F/\sqrt{n^{-1}\mathrm{log}N_n}$; (2) $\mathbb{E}Z_{ij}^2\leq 1 $. Observe that $\{\epsilon_i Z_{ij}\}_{i\geq 1}$ is a martingale difference sequence with respect to the filtration $\mathcal{F}_t = \sigma((\mathbf{X}_i),i\leq t) $. Then we check the moment conditions in theorem 1.2B in \cite{de36general}. Since $\epsilon_i$ and $\mathcal{F}_{i}$ are independent, $\mathbb{E}(\epsilon_i^2Z_{ij}^2|\mathcal{F}_{i})=Z_{ij}^2\sigma^2 $. For the $m$-th moment of $|\epsilon_iZ_{ij}|$ given $\mathcal{F}_{i}$,  we have that

\begin{equation*}
\begin{split}
\mathbb{E}[|\epsilon_i^mZ_{ij}^m|\big|\mathcal{F}_{i}]
\leq
\sigma^2 m!c_0^{m-2}|Z_{ij}|^m
\leq
\sigma^2Z_{ij}^2 m!\left(\frac{2c_0F}{\sqrt{n^{-1}\mathrm{log}N_n}}\right)^{m-2}.
\end{split}
\end{equation*}

We split $\mathbb{P}(|\sum_{i=1}^{n}\epsilon_iZ_{ij} |\geq x)$ into two parts:

\begin{equation}\label{eq7}
\begin{split}
\mathbb{P}(|\sum_{i=1}^{n}\epsilon_iZ_{ij} |\geq x)
\leq
\underbrace{\mathbb{P}(|\sum_{i=1}^{n}\epsilon_iZ_{ij} |\geq x, \sigma^2\sum_{i=1}^{n}Z_{ij}^2 \leq \sqrt{\frac{n}{\mathrm{log}N_n}}x)}_{a(x)} +\\ \underbrace{\mathbb{P}(\sum_{i=1}^{n}Z_{ij}^2 \geq \frac{1}{\sigma^2}\sqrt{\frac{n}{\mathrm{log}N_n}}x).}_{b(x)}
\end{split}
\end{equation}

Using theorem 1.2B in \cite{de36general} with $c=2c_0F/\sqrt{(n^{-1})\mathrm{log}N_n}$, $V_n = \sigma^2 \sum_{i=1}^nZ_{ij}^2$, $y=\sqrt{n^{-1}\mathrm{log}N_n}x$, we obtain 

\begin{equation}\label{eq8}
\begin{split}
a(x)
\leq
\mathrm{exp}\left(-\frac{x}{(2+2c_0F)\sqrt{\frac{n}{\mathrm{log}N_n}}}\right).
\end{split}
\end{equation}

To simplify the notation, we let $v_j^2=\mathrm{Var}(Z_{ij}^2)$ and $\tilde{v}_j^2 = \mathrm{Var}\left(Z_{ij}^2\right) + 2\sum_{k>i}\mathrm{Cov}\left( Z_{ij}^2, Z_{kj}^2\right)$. We know that $ (f_j(\mathbf{X}_i)-f_0(\mathbf{X}_i))^2\leq 4F^2$ and $1/\gamma_j^2\leq n$. From lemma \ref{lemma2}, we have

\begin{equation*}
    \begin{split}
\tilde{v}_j^2\leq
([24\mathrm{log}(n)/c]+3)v_j^2+22(16F^4)\frac{1}{n^2(\mathrm{exp}(c/3)-1)}.
    \end{split}
\end{equation*}

Since $v_j^2 \leq \mathbb{E}Z_{ij}^4\leq \mathbb{E}(f_j(\mathbf{X}_i)-f_0(\mathbf{X}_i))^4/(n^{-1}\mathrm{log}N_n\mathbb{E}(f_j(\mathbf{X}_i)-f_0(\mathbf{X}_i))^2)\leq4F^2n/\mathrm{log}N_n$, 
we derive that $\tilde{v}_j^2\leq C_1F^2n\mathrm{log}(n)/\mathrm{log}N_n + C_2F^4/n^2$, where $C_1$ and $C_2$ only depend on $c$.
Since $\mathbb{E}Z_{ij}^2\leq 1$, we have $b(x)\leq \mathbb{P}(\sum_{i=1}^n(Z_{ij}^2-\mathbb{E}Z_{ij}^2)\geq \frac{1}{\sigma^2}\sqrt{\frac{n}{\mathrm{log}N_n}}x-n)$. Observe that $|Z_{ij}^2-\mathbb{E}Z_{ij}^2|\leq 4F^2n/\mathrm{log}N_n$ and $\{Z_{ij}^2\}_i$ is an exponentially $\alpha$-mixing process, using Bernstein inequality (Theorem 2, \cite{merlevede2009bernstein}), again we have that

\begin{equation}\label{eq9}
\begin{split}
b(x)
&\leq
\mathrm{exp}\left( -\frac{C_3(\frac{1}{\sigma^2}\sqrt{\frac{n}{\mathrm{log}N_n}}x-n)^2}{n\tilde{v}_j^2+\frac{16n^2F^4}{\mathrm{log}N_n}+(\frac{1}{\sigma^2}\sqrt{\frac{n}{\mathrm{log}N_n}}x-n)\frac{n\mathrm{log}^2n}{\mathrm{log}N_n}} \right)\\
&\leq
\mathrm{exp}\left( -\frac{C_3(\frac{1}{\sigma^2}\sqrt{\frac{n}{\mathrm{log}N_n}}x-n)^2}{\frac{C_1F^2n^2\mathrm{log}n}{\mathrm{log}N_n} +\frac{C_2F^4}{n} +\frac{16n^2F^4}{\mathrm{log}N_n}+(\frac{1}{\sigma^2}\sqrt{\frac{n}{\mathrm{log}N_n}}x-n)\frac{n\mathrm{log}^2n}{\mathrm{log}N_n}} \right)\\
&\leq
\mathrm{exp}\left( -\frac{C_3(\frac{1}{\sigma^2}\sqrt{\frac{n}{\mathrm{log}N_n}}x-n)^2}{\frac{(C_1F^2+C_2F^4+16F^4)n^2\mathrm{log}n}{\mathrm{log}N_n}+(\frac{1}{\sigma^2}\sqrt{\frac{n}{\mathrm{log}N_n}}x-n)\frac{n\mathrm{log}^2n}{\mathrm{log}N_n}} \right).
\end{split}
\end{equation}

The last inequality uses the assumption that $n\geq \mathrm{log}N_n$.

Next, we use eq(\ref{eq7}), eq(\ref{eq8}), eq(\ref{eq9}) to estimate the upper bound for $\mathbb{E}T_2$ and $\mathbb{E}T_2^2$ in eq(\ref{eq6}). Taking the union bound over all $j=1,\cdots,N_n$, we have

\begin{equation*}
\begin{split}
\mathbb{P}(T_2\geq x) \leq 1 \wedge 2N_n (a(x)+b(x)).
\end{split}
\end{equation*}

Therefore we find that for all $\theta$

\begin{equation}\label{eq10}
\begin{split}
\mathbb{E}T_2\leq \theta\sqrt{n\mathrm{log}N_n} + 2N_n\int_{\theta\sqrt{n\mathrm{log}N_n}}^{\infty}a(x)dx+
2N_n\int_{\theta\sqrt{n\mathrm{log}N_n}}^{\infty}b(x)dx.
\end{split}
\end{equation}

From eq(\ref{eq8}), we have
\begin{equation}\label{eq11}
\begin{split}
2N_n\int_{\theta\sqrt{n\mathrm{log}N_n}}^{\infty}a(x)dx
\leq
2N_n \left((2+2c_0F)\sqrt{\frac{n}{\mathrm{log}N_n}}\right)\mathrm{exp}\left(-\frac{\theta\mathrm{log}N_n}{2+2c_0F}\right).
\end{split}
\end{equation}

When $\theta \geq 2+2c_0F$, it follows that

\begin{equation*}
\begin{split}
2N_n\int_{\theta\sqrt{n\mathrm{log}N_n}}^{\infty}a(x)dx
\leq
2 \left((2+2c_0F)\sqrt{\frac{n}{\mathrm{log}N_n}}\right).
\end{split}
\end{equation*}

From eq(\ref{eq9}), we have that for $\theta>\sigma^2$

\begin{equation*}
\begin{split}
&\int_{\theta\sqrt{n\mathrm{log}N_n}}^{\infty}b(x)dx\\
&\leq
\int_{\theta\sqrt{n\mathrm{log}N_n}}^{\infty} \mathrm{exp}\left( -\frac{C_3(\frac{1}{\sigma^2}\sqrt{\frac{n}{\mathrm{log}N_n}}x-n)^2}{\frac{(C_1F^2+C_2F^4+16F^4)n^2\mathrm{log}n}{\mathrm{log}N_n}+(\frac{1}{\sigma^2}\sqrt{\frac{n}{\mathrm{log}N_n}}x-n)\frac{n\mathrm{log}^2n}{\mathrm{log}N_n}} \right)dx\\
&\leq
\int_{\theta\sqrt{n\mathrm{log}N_n}}^{\infty}
\mathrm{exp}\left( -\frac{C_3(\frac{1}{\sigma^2}\sqrt{\frac{n}{\mathrm{log}N_n}}x-n)}{\frac{(C_1F^2+C_2F^4+16F^4)n\mathrm{log}n}{(\theta/\sigma^2-1)\mathrm{log}N_n}+\frac{n\mathrm{log}^2n}{\mathrm{log}N_n}} \right)dx\\
&\leq
\left(\frac{(C_1F^2+C_2F^4+16F^4)}{(\theta/\sigma^2-1)}+1\right)\frac{\sigma^2\sqrt{n}\mathrm{log}^2n}{C_3\sqrt{\mathrm{log}N_n}}\mathrm{exp}\left( -\frac{C_3(\theta/\sigma^2-1)\mathrm{log}N_n}{\frac{C_1F^2+C_2F^4+16F^4}{\theta/\sigma^2-1}\mathrm{log}n+\mathrm{log}^2n} \right).
\end{split}
\end{equation*}

When $\theta\geq\sigma^2(1+2\mathrm{log}^2n/C_3\vee \sqrt{(2C_1F^2+2C_2F^4+32F^4)\mathrm{log}n/C_3})$,

\begin{equation*}
\begin{split}
\mathrm{exp}\left( -\frac{C_3(\theta/\sigma^2-1)\mathrm{log}N_n}{\frac{C_1F^2+C_2F^4+16F^4}{\theta/\sigma^2-1}\mathrm{log}n+\mathrm{log}^2n} \right).
\leq 
\frac{1}{N_n}
\end{split}
\end{equation*}

Therefore,

\begin{equation}\label{eq12}
\begin{split}
2N_n\int_{\theta\sqrt{n\mathrm{log}N_n}}^{\infty}b(x)dx
\leq
(C_1F^2+C_2F^4+16F^4)\sigma^2\sqrt{\frac{n}{\mathrm{log}N_n}}
+
\frac{2\sigma^2\mathrm{log}^2n}{C_3}\sqrt{\frac{n}{\mathrm{log}N_n}}.
\end{split}
\end{equation}

We choose $\theta = (2+2c_0F)\vee\sigma^2(1+2\mathrm{log}^2n/C_3\vee \sqrt{(2C_1F^2+2C_2F^4+32F^4)\mathrm{log}n/C_3})$. Combining eq(\ref{eq10}), eq(\ref{eq11}) and eq(\ref{eq12}) gives
\begin{equation}\label{eq13}
    \mathbb{E}T \leq C(F,\sigma^2,c)\sqrt{n\mathrm{log}N_n}\mathrm{log}^2n,
\end{equation}
where $C(F,\sigma^2,c)$ is a constant depending on $F, \sigma^2, c$.

Similar to eq(\ref{eq10}), we can prove that

\begin{equation}\label{eq14}
\begin{split}
\mathbb{E}T_2^2\leq \theta^2n\mathrm{log}N_n + 2N_n\int_{\theta^2n\mathrm{log}N_n}^{\infty}a(\sqrt{x})dx+
2N_n\int_{\theta^2n\mathrm{log}N_n }^{\infty}b(\sqrt{x})dx.
\end{split}
\end{equation}

We still choose $\theta = (2+2c_0F)\vee\sigma^2(1+2\mathrm{log}^2n/C_3\vee \sqrt{(2C_1F^2+2C_2F^4+32F^4)\mathrm{log}n/C_3})$. Using the fact that $\int_{b^2}^{\infty} \mathrm{exp}(-(\sqrt{u}-c)a)du = 2(b/a+1/a^2)\mathrm{exp}(-a(b-c))$, we can estimate the upper for both $\int_{\theta^2n\mathrm{log}N_n}^{\infty}a(\sqrt{x})dx $ and $ \int_{\theta^2n\mathrm{log}N_n}^{\infty}b(\sqrt{x})dx$. From eq(\ref{eq8}), we have

\begin{equation*}
\begin{split}
\int_{\theta^2n\mathrm{log}N_n}^{\infty}a(\sqrt{x})dx
&\leq
\int_{\theta^2n\mathrm{log}N_n}^{\infty}\mathrm{exp}\left(-\frac{\sqrt{x}}{(2+2c_0F)\sqrt{\frac{n}{\mathrm{log}N_n}}}\right)dx\\
&=
2\left(\theta n(2+2c_0F)+ (2+2c_0F)^2\frac{n}{\mathrm{log}N_n}\right)\mathrm{exp}\left(-\frac{\theta \sqrt{n\mathrm{log}N_n}}{(2+2c_0F)\sqrt{n/\mathrm{log}N_n}}\right)\\
&\leq
C(F,c)\frac{n\mathrm{log}^2n}{N_n}.
\end{split}
\end{equation*}

From eq(\ref{eq9}), we have

\begin{equation*}
\begin{split}
\int_{\theta^2n\mathrm{log}N_n}^{\infty}b(\sqrt{x})dx
&\leq
\int_{\theta^2n\mathrm{log}N_n}^{\infty}\mathrm{exp}\left( -\frac{C_3(\frac{1}{\sigma^2}\sqrt{\frac{n}{\mathrm{log}N_n}}\sqrt{x}-n)^2}{\frac{(C_1F^2+C_2F^4+16F^4)n^2\mathrm{log}n}{\mathrm{log}N_n}+(\frac{1}{\sigma^2}\sqrt{\frac{n}{\mathrm{log}N_n}}\sqrt{x}-n)\frac{n\mathrm{log}^2n}{\mathrm{log}N_n}} \right)dx\\
&\leq
\int_{\theta^2n\mathrm{log}N_n}^{\infty}
\mathrm{exp}\left( -\frac{C_3(\frac{1}{\sigma^2}\sqrt{\frac{n}{\mathrm{log}N_n}}\sqrt{x}-n)}{\frac{(C_1F^2+C_2F^4+16F^4)n\mathrm{log}n}{(\theta/\sigma^2-1)\mathrm{log}N_n}+\frac{n\mathrm{log}^2n}{\mathrm{log}N_n}} \right)dx\\
&\leq
2\left[ \theta(\frac{C_1F^2+C_2F^4+16F^4}{\theta/\sigma^2-1}+1)n\mathrm{log}^2n + (\frac{C_1F^2+C_2F^4+16F^4}{\theta/\sigma^2-1}+1)^2\frac{n\mathrm{log}^4n}{\mathrm{log}N_n} \right]\\
&\quad \quad \quad \quad
\cdot\mathrm{exp}\left( -\frac{C_3(\theta/\sigma^2-1)\mathrm{log}N_n}{\frac{C_1F^2+C_2F^4+16F^4}{\theta/\sigma^2-1}\mathrm{log}n+\mathrm{log}^2n}\right)\\
&\leq
C(F,\sigma^2,c)n\mathrm{log}^4n/N_n.
\end{split}
\end{equation*}

Therefore, eq(\ref{eq14}) gives

\begin{equation*}
\begin{split}
\mathbb{E}T_2^2 
\leq
C(F,\sigma^2,c)n\mathrm{log}^4n\mathrm{log}N_n.
\end{split}
\end{equation*}

With eq(\ref{eq6}) and upper bound for $\mathbb{E}T_2, \mathbb{E}T_2^2$,

\begin{equation}\label{eq15}
\begin{split}
\mathbb{E}\bigg| \frac{2}{n}\sum_{i=1}^n \epsilon_i \tilde{f}(\mathbf{X})\bigg|
&\leq
2\delta+
\frac{2}{n}C(F,\sigma^2,c)\sqrt{n
\mathrm{log}N_n}\mathrm{log}^2n(\delta+\sqrt{n^{-1}\mathrm{log}N_n}) \\
&\quad\quad\quad
+\frac{2}{n}\sqrt{C(F,\sigma^2,c)n\mathrm{log}^4n\mathrm{log}N_n}R^{1/2}(\tilde{f},f_0)\\
&\leq
\delta C(F,\sigma^2,c)\mathrm{log}^2n + C(F,\sigma^2,c)\ \frac{\mathrm{log}N_n}{n}\mathrm{log}^2n\\
&\quad\quad\quad
+\frac{2}{n}\sqrt{C(F,\sigma^2,c)n\mathrm{log}^4n\mathrm{log}N_n}R^{1/2}(\tilde{f},f_0).
\end{split}
\end{equation}

From the upper bound of inequality in (\uppercase\expandafter{\romannumeral1}), setting $\epsilon=1$ gives

\begin{equation*}
\begin{split}
\sqrt{R(\tilde{f},f_0)}
&\leq
\sqrt{2\widehat{R}_n(\tilde{f},f_0)}
+
\sqrt{2C_F\frac{\mathrm{log}^2n\mathrm{log}N_n}{n}}
+
\sqrt{2C_F\delta\mathrm{log}^2n}\\
&\quad\quad\quad\quad\quad\quad\quad\quad\quad\quad\quad
+
\sqrt{\frac{4C_FF^2\mathrm{log}^4n\mathrm{log}N_n}{n}}\\
&\leq
\sqrt{2\widehat{R}_n(\tilde{f},f_0)}
+
\sqrt{C_F\frac{\mathrm{log}^4n\mathrm{log}N_n}{n}}
+
\sqrt{C_F\delta\mathrm{log}^2n}.
\end{split}
\end{equation*}

Therefore,
\begin{equation}\label{eq16}
\begin{split}
\frac{2}{n}\sqrt{C(F,\sigma^2,c)n\mathrm{log}^4n\mathrm{log}N_n}R^{1/2}(\tilde{f},f_0)
&\leq
\sqrt{\frac{C(F,\sigma^2,c)\mathrm{log}^4n\mathrm{log}N_n}{n}}\widehat{R}_n^{1/2}(\tilde{f},f_0)\\
&\quad
+
C(F,\sigma^2,c)\frac{\mathrm{log}^4n\mathrm{log}N_n}{n} + 2\sqrt{\frac{\delta C_F\mathrm{log}^6n\mathrm{log}N_n}{n}}.
\end{split}
\end{equation}

Using the fact that $2\sqrt{\delta C_F\mathrm{log}^6n\mathrm{log}N_n/n}\leq \delta\mathrm{log}^2n +C_F \mathrm{log}^4n\mathrm{log}N_n/n$, then inequality (\uppercase\expandafter{\romannumeral2}) follows from eq(\ref{eq15}), eq(\ref{eq16}).\\

(\uppercase\expandafter{\romannumeral3}): For any fixed $f\in \mathcal{F}$, $\mathbb{E}[\frac{1}{n}\sum_{i=1}^n(Y_i-\widehat{f}(\mathbf{X}_i))^2]\leq \mathbb{E}[\frac{1}{n}\sum_{i=1}^n(Y_i-f(\mathbf{X}_i))^2]+\Delta_n$. Because of $\mathbf{X}_i\overset{\mathcal{D}}{=}\mathbf{X}$ and $f$ being deterministic, we have $\mathbb{E}[\Vert f-f_0 \Vert_n^2] = \mathbb{E}(f(\mathbf{X})-f_0(\mathbf{X}))^2$. Since also $\mathbb{E}[\epsilon_if(\mathbf{X}_i)]=0$,

\begin{equation}\label{eq17}
\begin{split}
\widehat{R}_n(\widehat{f},f_0)
&\leq
\mathbb{E}[\Vert f-f_0 \Vert_n^2]+
\mathbb{E}[\frac{2}{n}\sum_{i=1}^n\epsilon_i\widehat{f}(\mathbf{X}_i)]+\Delta_n\\
&\leq
\mathbb{E}[\Vert f-f_0 \Vert_n^2]+\Delta_n+
\delta C(F,\sigma^2,c)\mathrm{log}^2n + C(F,\sigma^2,c)\mathrm{log}^4n\frac{\mathrm{log}N_n}{n}\\
&\quad\quad\quad
+2C(F,\sigma^2,c)\sqrt{\frac{\mathrm{log}^4n\mathrm{log}N_n}{n}}\widehat{R}^{1/2}(\widehat{f},f_0).
\end{split}
\end{equation}

Observe that

\begin{equation}\label{eq18}
\begin{split}
2C(F,\sigma^2,c)\sqrt{\frac{\mathrm{log}^4n\mathrm{log}N_n}{n}}\widehat{R}^{1/2}(\widehat{f},f_0)
&\leq
\frac{1+\epsilon}{\epsilon}C^2(F,\sigma^2,c)\mathrm{log}^4n\mathrm{log}N_n/n \\
&\quad\quad\quad
+\frac{\epsilon}{1+\epsilon}\widehat{R}(\widehat{f},f_0).
\end{split}
\end{equation}

Combining eq(\ref{eq17}) with eq(\ref{eq18}) and rearranging $\widehat{R}(\widehat{f},f_0)$ to one side give inequality (\uppercase\expandafter{\romannumeral3}).\\

(\uppercase\expandafter{\romannumeral4}): Let $\tilde{f}\in \mathrm{argmin}_{f\in\mathcal{F}} \sum_{i=1}^{n}(Y_i-f(\mathbf{X}_i))^2$ be an empirical risk minimizer. We have

\begin{equation}\label{eq19}
\begin{split}
&\widehat{R}_n(\widehat{f},f_0) -\widehat{R}_n(\tilde{f},f_0)\\
&=
\Delta_n 
+ 
2\mathbb{E}\left[\frac{1}{n}\sum_{i=1}^n\epsilon_i\widehat{f}(\mathbf{X}_i)\right]
-
2\mathbb{E}\left[\frac{1}{n}\sum_{i=1}^n\epsilon_i\tilde{f}(\mathbf{X}_i)\right]\\
&\geq
\Delta_n 
-
2\delta C(F,\sigma^2,c)\mathrm{log}^2n 
-
2C(F,\sigma^2,c)\mathrm{log}^4n\frac{\mathrm{log}N_n}{n}\\
&-
2C(F,\sigma^2,c)\sqrt{\frac{\mathrm{log}^4n\mathrm{log}N_n}{n}}\widehat{R}_n^{1/2}(\widehat{f},f_0)
-
2C(F,\sigma^2,c)\sqrt{\frac{\mathrm{log}^4n\mathrm{log}N_n}{n}}\widehat{R}_n^{1/2}(\tilde{f},f_0).
\end{split}
\end{equation}

Observe that

\begin{equation}\label{eq20}
\begin{split}
&2C(F,\sigma^2,c)\sqrt{\frac{\mathrm{log}^4n\mathrm{log}N_n}{n}}\widehat{R}^{1/2}(\widehat{f},f_0)
\leq
C^2(F,\sigma^2,c)\frac{\mathrm{log}^4n\mathrm{log}N_n}{n}
+
\widehat{R}(\widehat{f},f_0)\\
&2C(F,\sigma^2,c)\sqrt{\frac{\mathrm{log}^4n\mathrm{log}N_n}{n}}\widehat{R}_n^{1/2}(\tilde{f},f_0)
\leq
\frac{1-\epsilon}{\epsilon}C^2(F,\sigma^2,c)\frac{\mathrm{log}^4n\mathrm{log}N_n}{n}
+
\frac{\epsilon}{1-\epsilon}\widehat{R}_n(\tilde{f},f_0).
\end{split}
\end{equation}

Inequality (\uppercase\expandafter{\romannumeral4}) follows from eq(\ref{eq19}), eq(\ref{eq20}).
\end{proof}

\begin{lemma}\label{lemma4}
Consider the d-variate nonparametric regression model $Y_i=f_0(\mathbf{X}_i)+\epsilon_i$
with unknown regression function $f_0$, satisfying  $\Vert f_0\Vert_{\infty} \leq F$ for some $F\geq 1$. Let $\widehat{f}_n$ be any estimator taking values in the class $\mathcal{F}(L,p,s,F)$ and $\Delta_n$ is defined in lemma \ref{lemma3}. Under Assumptions 1-3, for any $\epsilon \in (0,1]$, there exists a constant $C_{\epsilon}$, only depending on $\epsilon$, such that with

\begin{equation*}
\gamma_{\epsilon,n} := C_{\epsilon}F^2\frac{(s+1)\mathrm{log}(n(s+1)^L)p_0p_{L+1}\mathrm{log}^4n}{n},
\end{equation*}
\begin{equation*}
\begin{split}
&(1-\epsilon)^2\Delta_n(\widehat{f}_n,f_0) -\gamma_{\epsilon,n}\\
&\leq R(\widehat{f}_n,f_0)
\leq (1+\epsilon)^2\left( \mathop{\mathrm{inf}}_{f\in\mathcal{F}(L,p,s,F)}\Vert f-f_0 \Vert_{\infty}^2 + \Delta_n(\widehat{f}_n,f_0) \right)+\gamma_{\epsilon,n}
\end{split}
\end{equation*}
\end{lemma}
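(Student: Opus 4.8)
The plan is to specialize Lemma~\ref{lemma3} to the sparse ReLU network class $\mathcal{F}(L,p,s,F)$ and then feed in a metric-entropy bound for that class. First I would check the hypotheses of Lemma~\ref{lemma3}: every $f\in\mathcal{F}(L,p,s,F)$ satisfies $\Vert f\Vert_\infty\le F$ by definition of the class, and by assumption $\Vert f_0\Vert_\infty\le F$ with $F\ge1$, so $\{f_0\}\cup\mathcal{F}(L,p,s,F)$ consists of functions bounded by $F$; Assumptions~1--3 are in force. Hence Lemma~\ref{lemma3} applies with $\mathcal{F}=\mathcal{F}(L,p,s,F)$ and $N_n=\mathcal{N}(\delta,\mathcal{F}(L,p,s,F),\Vert\cdot\Vert_\infty)$, for every $\epsilon\in(0,1]$ and every $\delta>0$ small enough that $N_n\ge3$.

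The quantitative input is the covering number of the sparse, bounded-weight ReLU class. By the standard counting argument for such networks (cf.\ Lemma~5 of \cite{schmidt2020nonparametric}) one has
\begin{equation*}
\log\mathcal{N}\big(\delta,\mathcal{F}(L,p,s,F),\Vert\cdot\Vert_\infty\big)\ \le\ (s+1)\log\!\big(c\,\delta^{-1}(L+1)V^2\big),\qquad V:=\prod_{\ell=0}^{L+1}(p_\ell+1),
\end{equation*}
for an absolute constant $c$. Since a network with at most $s$ nonzero parameters has at most $s$ active units in each hidden layer, the widths $p_1,\dots,p_L$ may be taken $\le s+1$ without changing the realized function class, whence $V^2\lesssim(p_0+1)^2(p_{L+1}+1)^2(s+1)^{2L}$. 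I would then take $\delta=1/n$, so that $\log\big(c\,\delta^{-1}(L+1)V^2\big)\lesssim\log(n(s+1)^L)+p_0+p_{L+1}\lesssim\log(n(s+1)^L)\,p_0p_{L+1}$, giving $\log N_n\lesssim(s+1)\log(n(s+1)^L)\,p_0p_{L+1}$; at the same time the discretization terms $\delta\,C(F,\sigma^2,c)\log^2n=C(F,\sigma^2,c)\log^2n/n$ appearing in Lemma~\ref{lemma3} are of strictly smaller order than $\log N_n\cdot\log^4n/n$ and hence absorbable.

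Substituting into Lemma~\ref{lemma3}, the upper bound reads
\begin{equation*}
R(\widehat{f}_n,f_0)\le(1+\epsilon)^2\Big(\mathop{\mathrm{inf}}_{f\in\mathcal{F}(L,p,s,F)}\Vert f-f_0\Vert_\infty^2+\Delta_n(\widehat{f}_n,f_0)\Big)+\frac{(1+\epsilon)^3}{\epsilon}\,C(F,\sigma^2,c)\,\frac{\log^4n\,\log N_n}{n}+(\text{lower order}),
\end{equation*}
and the lower bound $(1-\epsilon)^2\Delta_n(\widehat{f}_n,f_0)-\tfrac{1}{\epsilon}C(F,\sigma^2,c)\tfrac{\log^4n\,\log N_n}{n}-(\text{lower order})\le R(\widehat{f}_n,f_0)$. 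It remains to collect constants: one bounds $C(F,\sigma^2,c)\lesssim F^2$ up to a factor depending only on $\sigma^2$ and $c$, inserts the entropy estimate for $\log N_n$, and folds the prefactors $(1+\epsilon)^3/\epsilon$, $1/\epsilon$ and all residual lower-order terms into a single constant $C_\epsilon$ depending only on $\epsilon$. This yields exactly $\gamma_{\epsilon,n}=C_\epsilon F^2\frac{(s+1)\log(n(s+1)^L)p_0p_{L+1}\log^4n}{n}$ as the additive error in both directions, which is the assertion.

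The main obstacle is not Lemma~\ref{lemma3} (the heavy lifting there is already done) but the bookkeeping around the covering number: one must choose $\delta$ small enough that the discretization error $\delta\log^2n$ is negligible yet large enough (polynomially in $n$) that $\log N_n$ stays of order $(s+1)\log(n(s+1)^L)p_0p_{L+1}$, justify the width-reduction step $p_\ell\le s+1$ for the hidden layers, and verify that every remaining term is dominated by $\gamma_{\epsilon,n}$ uniformly over the architecture $(L,\mathbf{p},s)$ and for all $n$ large (for small $n$ the crude bound $R(\widehat{f}_n,f_0)\le4F^2$ already suffices after enlarging $C_\epsilon$).
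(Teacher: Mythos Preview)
Your proposal is correct and follows essentially the same route as the paper: apply Lemma~\ref{lemma3} to $\mathcal{F}(L,p,s,F)$, take $\delta=1/n$, and plug in the Schmidt--Hieber entropy bound for sparse ReLU networks. The paper's proof is in fact terser than yours---it simply quotes the ready-made estimate $\log\mathcal{N}(\delta,\mathcal{F}(L,p,s,\infty),\|\cdot\|_\infty)\le(s+1)\log\big(2^{2L+5}\delta^{-1}(L+1)p_0^2p_{L+1}^2s^{2L}\big)$ from Remark~1 of \cite{schmidt2020nonparametric} (which already has the hidden-layer widths replaced by $s$) rather than rederiving it via the width-reduction argument you sketch, but the substance is identical.
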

\begin{proof}
Lemma \ref{lemma4} follows from Lemma \ref{lemma3} with the choice of $\delta =1/n$, $\mathcal{F}=\mathcal{F}(L,p,s,\infty)$ and Remark 1 in \cite{schmidt2020nonparametric}
\begin{equation*}
\begin{split}
\mathrm{log}\mathcal{N}\left( \delta,\mathcal{F}(L,p,s,\infty),\Vert \cdot \Vert_{\infty} \right) \leq (s+1)\mathrm{log}\left( 2^{2L+5}\delta^{-1}(L+1)p_0^2p_{L+1}^2s^{2L} \right)
\end{split}
\end{equation*}
\end{proof}

\subsection{Proof of Theorem \ref{th1}}\label{supp:sec:sim:1.2}
\begin{proof}[Proof of theorem \ref{th1}]

Combining lemma \ref{lemma4} with the assumed bounds on $L$ and $s$, it follows that

\begin{equation}\label{th1_eq3}
\begin{split}
&\frac{1}{4}\Delta_n(\widehat{f}_n,f_0) - C^{'}\phi_n L\mathrm{log}^6n\leq R(\widehat{f},f_0)\\
&4 \mathop{\mathrm{inf}}_{f^{*}\in \mathcal{F}(L,p,s,F)}\Vert f^{*}-f_0 \Vert_{\infty}^2+4\Delta_n(\widehat{f}_n,f_0) +C^{'}\phi_nL\mathrm{log}^6n,
\end{split}
\end{equation}
where we used $\epsilon=1/2$ for the lower bound and $\epsilon = 1$ for the upper bound. Let $C=8C^{'}$, then $ C^{'}\phi_nL\mathrm{log}^6n \leq\widehat{R}(\widehat{f},f_0)$ whenever $ \Delta_n(\widehat{f},f_0)\geq C\phi_n L\mathrm{log}^6n$. The lower bound on \eqref{th1_eq2} is proved.\\

To get the upper bound, we need to control $\mathop{\mathrm{inf}}_{f^{*}\in \mathcal{F}(L,p,s,F)}\Vert f^{*}-f_0 \Vert_{\infty}^2 $. From eq(26) in \cite{schmidt2020nonparametric}, we know that
\begin{equation*}
\begin{split}
\mathop{\mathrm{inf}}_{f^{*}\in\mathcal{F}(L,p,s)}\Vert f^{*}-f_0 \Vert_{\infty}^2
\leq
C^{'}\mathop{\mathrm{max}}_{i=0,\cdots,q} N^{-\frac{2\beta_i^{*}}{t_i}}
\leq
C^{'}\mathop{\mathrm{max}}_{i=0,\cdots,q} C_0^{-\frac{2\beta_i^{*}}{t_i}} n ^{-\frac{\beta_i^{*}}{2\beta_i^{*}+t_i}}.
\end{split}
\end{equation*}

Hence, $\mathop{\mathrm{inf}}_{f^{*}\in\mathcal{F}(L,p,s,F)}\Vert f^{*}-f_0 \Vert_{\infty}^2\leq C_1 \phi_n $. Therefore, there exists $\tilde{f}\in \mathcal{F}(L,p,s,F)$ such that $\Vert f^{*}-f_0 \Vert_{\infty}^2\leq C_1 \phi_n $. Define $f^{*}= \tilde{f}(\Vert f_0 \Vert_{\infty}/\Vert \tilde{f} \Vert_{\infty} )$. Then $\Vert f^{*} \Vert_{\infty}\leq \Vert f_0 \Vert_{\infty} = \Vert g_q \Vert_{\infty} \leq K\leq F$, which implies that $\tilde{f}\in \mathcal{F}(L,p,s,F)$. Writing $f^{*} - f_0 = (f^{*} - \tilde{f}) + (\tilde{f}_n - f_0)$, we obtain $\Vert f^{*} - f_0 \Vert_{\infty} \leq 2\Vert \tilde{f} - f_0 \Vert_{\infty} \leq 2C_1\phi_0$. From eq(\ref{th1_eq3}), we obtain that
\begin{equation}\label{th1_eq4}
    R(\widehat{f},f_0)\leq
8C_1\phi_0+4\Delta_n(\widehat{f}_n,f_0) +C^{'}\phi_nL\mathrm{log}^6n.
\end{equation}

The upper bounds on eq(\ref{th1_eq1}) and eq(\ref{th1_eq2}) follow from eq(\ref{th1_eq4}).
\end{proof}

\subsection{Proof of Theorem \ref{th2}}\label{supp:sec:sim:1.3}
\begin{proof}[Proof of Theorem \ref{th2}]
From lemma~\ref{lemma3}, we know that
\begin{equation*}
\begin{split}
&(1-\epsilon)^2 \Delta_n - C(F,\sigma^2,c)\frac{\mathrm{log}^4n\mathrm{log}N_n}{n\epsilon}-\delta C(F,\sigma^2,c)\mathrm{log}^2n\\
&\leq R(\widehat{f},f_0)\leq\\
&(1+\epsilon)^2\left(\mathop{\mathrm{inf}}_{f^{*}\in \mathcal{F}} \Vert f^{*}-f_0\Vert_{\infty}^{2}+ \Delta_n(\widehat{f},f)+C(F,\sigma^2,c)\delta \mathrm{log}^2n\right)\\
&+
\frac{(1+\epsilon)^3}{\epsilon}C(F,\sigma^2,c)\frac{\mathrm{log}^4n\mathrm{log}N_n}{n}.
\end{split}
\end{equation*}

Let $
\mathcal{F}:= \mathcal{F}(L,p,s,\infty)$. From remark 1 in \cite{schmidt2020nonparametric},
\begin{equation*}
\begin{split}
\mathrm{log}\mathcal{N}\left( \delta,\mathcal{F}(L,p,s,\infty),\Vert \cdot \Vert_{\infty} \right) \leq (s+1)\mathrm{log}\left( 2^{2L+5}\delta^{-1}(L+1)p_0^2p_{L+1}^2s^{2L} \right).
\end{split}
\end{equation*}

According to the assumption $s\asymp dL$ and $d\lesssim p$ with the choice of $\delta = \frac{1}{n}$, we have that $\mathrm{log}N_n \asymp dL\mathrm{log}n$. Therefore, using the result of lemma \ref{lemma3} with $\epsilon =\frac{1}{2}$, we have that
\begin{equation}\label{th2_eq1}
\begin{split}
&\frac{1}{4} \Delta_n - C^{'}\frac{dL\mathrm{log}^5n}{n}
\leq R(\widehat{f},f_0)\leq\\
& 4\mathop{\mathrm{inf}}_{f^{*}\in \mathcal{F}} \Vert f^{*}-f_0\Vert_{\infty}^{2}+ 4\Delta_n(\widehat{f},f)
+
C^{'}\frac{dL\mathrm{log}^5n}{n}.
\end{split}
\end{equation}

The lower bound of eq(\ref{th2_stat_eq2}) can be derived from the left side of eq(\ref{th2_eq1}) with the assumption $d\asymp n^{\frac{1}{\alpha + 1}}$. To derive the upper bound of eq(\ref{th2_stat_eq1}) and eq(\ref{th2_stat_eq2}), we need the upper bound of $\mathrm{inf}_{f^{*}\in \mathcal{F}(L,\mathbf{p},s,F)} \Vert f^{*}-f_0\Vert_{\infty} $. Let $\tilde{f} = \sum_{i=1}^{d}\phi_i Y_{t-i} \in \mathcal{F}(3, (d,2Kd,1),s), s = 4Kd$ (For each $Y_{t-i}$ in input layer, it maps to $2\lceil \phi_i\rceil$ units in hidden layer. $\lceil \phi_i\rceil$ units are $\phi_i(Y_{t-i})_{+}/\lceil \phi_i\rceil$ and other $\lceil \phi_i\rceil$ units are $-\phi_i(-Y_{t-i})_{+}/\lceil \phi_i\rceil$. Assumption $\phi_i < K$ implies that there are at most $2Kd$ units in hidden layer. The output $\tilde{f}$ equals to the summation of all hidden units). We have that

\begin{equation}\label{th2_eq2}
\begin{split}
\Vert \tilde{f}-f_0\Vert_{\infty}
= \Vert \sum_{i=d+1}^{\infty}\phi_i Y_{t-i} \Vert_{\infty}
\leq K\sum_{d+1}^{\infty}|\phi_i|
\lesssim
\frac{1}{(d+2)^{\alpha}}.
\end{split}
\end{equation}

Define $f^{*}= \tilde{f}(\Vert f_0 \Vert_{\infty}/\Vert \tilde{f} \Vert_{\infty} )$. Then $\Vert f^{*} \Vert_{\infty} \leq \Vert f_0 \Vert_{\infty} \leq K\sum_{i=1}^{\infty}|\phi_i|  \leq KM/2^{\alpha}\leq F $. Therefore, $f^{*}\in \mathcal{F}(4,\mathbf{p}^{*},4Kd+1, F)$. To extend the layer of neural network from $4$ to $L$, we can add $(L-4)$ additional identical layers before $\mathcal{F}(4,\mathbf{p},(6K+2)d, F)$. Note that the dimension of input vector is $d$, the deepened network belongs to 
\[
\mathcal{F}(L,(d,\cdots,d,\mathbf{p}^{*}),(4K+L)d+1, F).
\]

Therefore, the neural network $f^{*} \subset \mathcal{F}(L,\mathbf{p},s, F)$.
Writing $f^{*} - f_0 = (f^{*} - \tilde{f}) + (\tilde{f}_n - f_0)$, we obtain $\Vert f^{*} - f_0 \Vert_{\infty} \leq 2\Vert \tilde{f} - f_0 \Vert_{\infty} $. With eq(\ref{th2_eq1}) and eq(\ref{th2_eq2}), we have that

\begin{equation}
\begin{split}
R(\widehat{f},f_0)\leq
C\frac{1}{d^{\alpha}} + 4\Delta_n(\widehat{f},f)
+
C^{'}\frac{dL\mathrm{log}^5n}{n}.
\end{split}
\end{equation}

Using the assumption $d\asymp n^{\frac{1}{\alpha + 1}}$ again, we can derive the upper bound of eq(\ref{th2_stat_eq1}) and eq(\ref{th2_stat_eq2}).
\end{proof}

\section{Additional details on numerical experiments}\label{supp:sec:sim:2}
\textbf{Computer information}:\\
Processor:   Intel(R) Core(TM) i7-9750H CPU @ 2.60GHz   2.59 GHz\\
Installed RAM: 16.0 GB (15.9 GB usable)\\
System type: Windows 10 Home 64-bit operating system, x64-based processor\\
Disk:           Samsung mzvlb512hbjq-000h1\\
GPU:            NVIDIA GeForce GTX 1660 Ti


\textbf{Additional figure in Section \ref{sec:real}}: Figure \ref{fig4}.\\

\begin{figure}[!ht]
\centering
\subfigure[$\gamma = 0$]{
\begin{minipage}[t]{0.45\linewidth}
\centering
\includegraphics[width=2.2in]{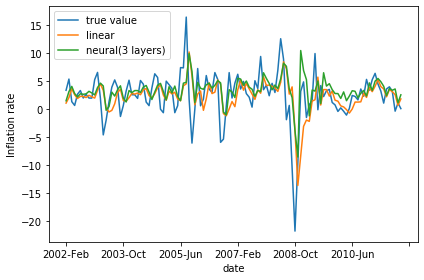}
\end{minipage}%
}%
\subfigure[$\gamma = 5$]{
\begin{minipage}[t]{0.45\linewidth}
\centering
\includegraphics[width=2.2in]{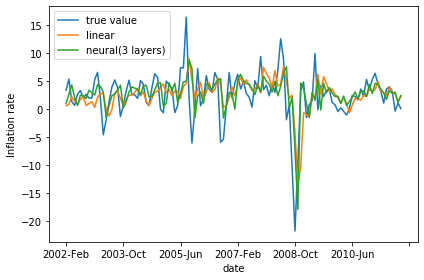}
\end{minipage}%
}%
\\
\subfigure[$\gamma = 7.5$]{
\begin{minipage}[t]{0.45\linewidth}
\centering
\includegraphics[width=2.2in]{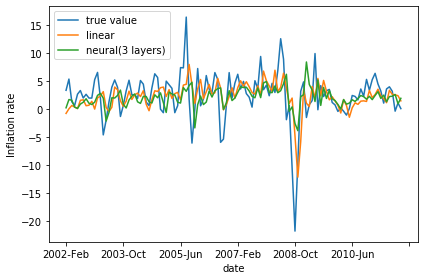}
\end{minipage}%
}%
\subfigure[$\gamma = 10$]{
\begin{minipage}[t]{0.45\linewidth}
\centering
\includegraphics[width=2.2in]{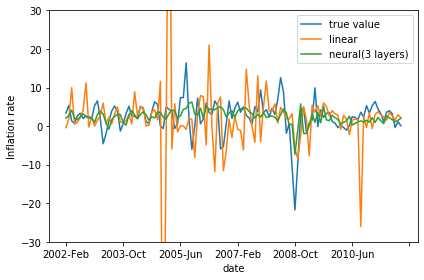}
\end{minipage}%
}%
\centering
\caption{Estimation for monthly inflation rate. The blue line reflects the true change of inflation rate from 2002 Feb to 2011 Nov. Other two lines correspond to two estimators.}
\label{fig4}
\end{figure}

\section{Numerical comparison between DNN and LSTM}\label{supp:sec:sim:3}
In this section, we compare the predicting performance of DNN with LSTM. We consider the same set of models as in section \ref{sec:sim:2}. The input dimension of both DNN and LSTM are determined by AIC and the number of hidden layer units are set to be 10. The results are shown in figure \ref{fig5}. As we can see from this figure, DNN has a slightly faster convergence rate than LSTM in linear AR models while for non-linear AR models, LSTM performs slightly better. In fact, the difference between the results of DNN and LSTM in these simulation settings are not statistically significant. 


\begin{figure}[!ht]
\centering
\subfigure[model (1)]{
\begin{minipage}[t]{0.45\linewidth}
\centering
\includegraphics[width=2.2in]{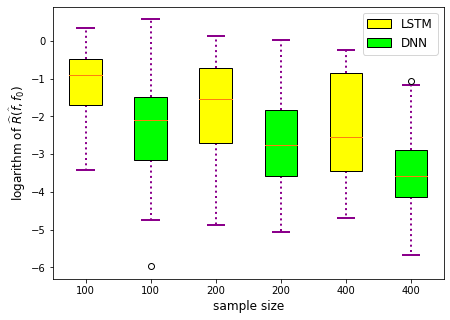}
\end{minipage}%
}%
\subfigure[model (2)]{
\begin{minipage}[t]{0.45\linewidth}
\centering
\includegraphics[width=2.2in]{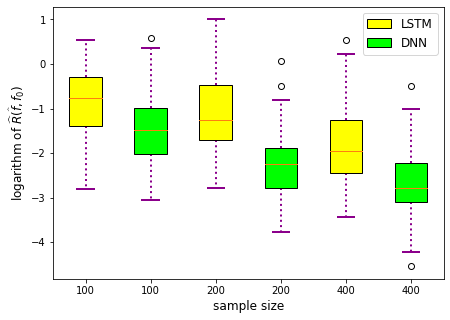}
\end{minipage}%
}%
\\
\subfigure[model (3)]{
\begin{minipage}[t]{0.45\linewidth}
\centering
\includegraphics[width=2.2in]{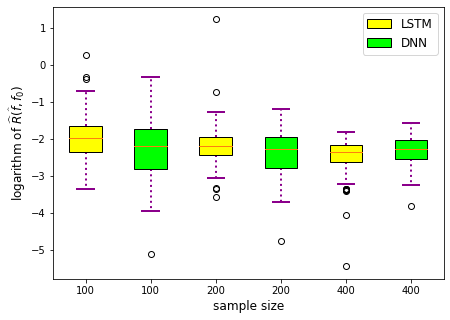}
\end{minipage}%
}%
\subfigure[model (4)]{
\begin{minipage}[t]{0.45\linewidth}
\centering
\includegraphics[width=2.2in]{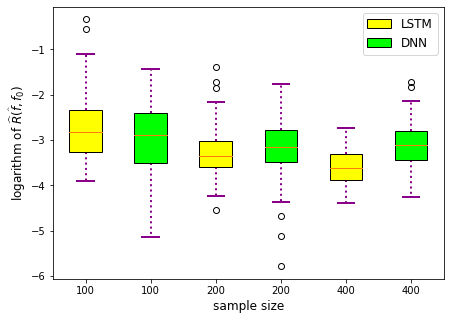}
\end{minipage}%
}%
\centering
\caption{: Box plots of logarithm of the mean square error on the testing set as a function of sample
size.}
\label{fig5}
\end{figure}

\end{document}